\documentclass[conference]{IEEEtran}
\IEEEoverridecommandlockouts

\usepackage{bm}
\usepackage{algorithm}
\usepackage{multirow}
\usepackage{booktabs}
\usepackage[caption=false, font=footnotesize]{subfig}
\usepackage{cite}
\usepackage{amsmath,amssymb,amsfonts}
\usepackage{algorithmic}
\usepackage{graphicx}
\usepackage{textcomp}
\usepackage{xcolor}

\newtheorem{assumption}{\textbf{Assumption}}
\newtheorem{theorem}{\textbf{Theorem}}
\newtheorem{lemma}{\textbf{Lemma}}
\newtheorem{remark}{Remark}

\usepackage{graphics}

\allowdisplaybreaks

\def\BibTeX{{\rm B\kern-.05em{\sc i\kern-.025em b}\kern-.08em
    T\kern-.1667em\lower.7ex\hbox{E}\kern-.125emX}}

\begin{document}

\title{FedKRSO: Communication and Memory Efficient Federated Fine-Tuning of Large Language Models\vspace{-0.5em} \\
\thanks{G. Yang and T. Wu contributed equally to this work. Y. Sun and Y. Gong are the co-corresponding authors. The work of G. Yang, Y. Guo, and Y. Gong was supported in part by the U.S. National Science Foundation under Grants CNS-2047761, CNS-2106761, and CNS-2318683. The work of T. Wu and Y. Sun was supported in part by a Seed Grant award from the Institute for Computational and Data Sciences (ICDS) at the Pennsylvania State University (PSU). This content is solely the responsibility of the authors and does not necessarily represent the views of the ICDS.} 
}

\author{\IEEEauthorblockN{Guohao Yang\textsuperscript{$\ast$}, Tongle Wu\textsuperscript{$\star$}, Yuanxiong Guo\textsuperscript{$\ast$}, Ying Sun\textsuperscript{$\star$}, and Yanmin Gong\textsuperscript{$\ast$}}
\IEEEauthorblockA{\textsuperscript{$\ast$}University of Texas at San Antonio, San Antonio, TX 78249 \\
\textsuperscript{$\star$}Penn State University, University Park, PA 16802 \\
Emails: guohao.yang@utsa.edu, tfw5381@psu.edu, yuanxiong.guo@utsa.edu, ybs5190@psu.edu,  yanmin.gong@utsa.edu}
\vspace{-3.0em}
}

\maketitle

\begin{abstract}
Fine-tuning is essential to adapt general-purpose large language models (LLMs) to domain-specific tasks. As a privacy-preserving framework to leverage decentralized data for collaborative model training, Federated Learning (FL) is gaining popularity in LLM fine-tuning, but remains challenging due to the high cost of transmitting full model parameters and computing full gradients on resource-constrained clients. While Parameter-Efficient Fine-Tuning (PEFT) methods are widely used in FL to reduce communication and memory costs, they often sacrifice model performance compared to FFT. This paper proposes FedKRSO (Federated $K$-Seed Random Subspace Optimization), a novel method that enables communication and memory efficient FFT of LLMs in federated settings. In FedKRSO, clients update the model within a shared set of random low-dimension subspaces generated by the server to save memory usage. Furthermore, instead of transmitting full model parameters in each FL round, clients send only the model update accumulators along the subspaces to the server, enabling efficient global model aggregation and dissemination.  
%
%
%
By using these strategies, FedKRSO can substantially reduce communication and memory overhead while overcoming the performance limitations of PEFT, closely approximating the performance of federated FFT.  
The convergence properties of FedKRSO are analyzed rigorously under general FL settings. Extensive experiments on the GLUE benchmark across diverse FL scenarios demonstrate that FedKRSO achieves both superior performance and low communication and memory overhead, paving the way towards on federated LLM fine-tuning at the resource-constrained edge. 
\end{abstract}

\begin{IEEEkeywords}
Federated learning, large language models, fine-tuning, and resource efficiency.
\end{IEEEkeywords}

\section{Introduction}

Large Language Models (LLMs) have demonstrated impressive capabilities in a wide range of real-world applications, including question answering, text summarization, and dialogue generation~\cite{zhao2023survey}. Despite their strong generalization capabilities, task-specific fine-tuning is often necessary to further enhance their responsiveness and alignment with downstream objectives. %
%
%
%
As task-specific data are typically privacy-sensitive and distributed across multiple clients, Federated Learning (FL)~\cite{mcmahan2017communication} that enables collaborative model training without centralizing data has received increasing attention for LLM fine-tuning. However, applying FL to LLM fine-tuning faces significant challenges. In particular, the massive number of parameters in modern LLMs makes {full-parameter fine-tuning (FFT)} memory, computation, and communication-intensive, rendering it impractical for deployment in resource-constrained edge environments \cite{zhang2022scalable, guo2022hybrid,zhang2024heterogeneity,gao2025heterogeneity}.


To address the above challenge, recent studies have Parameter-Efficient Fine-Tuning (PEFT) techniques within the FL framework \cite{infocommzhenxiao, yeh2023navigating, sun2024improving, zhang2024towards, wang2024flora, yan2024frlora, bai2024federated, babakniya2023slora}. In particular, as the most commonly used PEFT method, Low-Rank Adaption (LoRA) \cite{hu2022lora}  adapts only a small subset of model parameters. %
%
%
While effective in reducing memory footprint, LoRA often suffers from a performance drop compared to FFT, especially in the presence of heterogeneous data distributions in FL\cite{babakniya2023slora}. It has been observed that the limited capacity of low-rank updates in LoRA makes it difficult to fully capture the global model dynamics, leading to slower convergence or sub-optimal accuracy \cite{chen2022revisiting, pu2023empirical}.

To achieve strong performance under federated settings in resource-constrained edge environments,
we propose a new federated fine-tuning method that integrates subspace optimization with a finite seed-based random projection mechanism. Our approach named \textbf{FedKRSO} (\textbf{Fed}erated \bm{$K$}-Seed \textbf{R}andom \textbf{S}ubspace \textbf{O}ptimization) maintains the resource efficiency benefits of PEFT while more closely approaching the model performance of FFT. %
%
%
%
%
Specifically, FedKRSO includes two key components: First, during local training in a FL round, each client compresses the full gradients by projecting them onto a shared finite set of random subspaces and then uses the compressed gradients to optimize the model. These subspaces are generated by the server to ensure that the compression is unbiased so that the local updates take the same form as FFT in expectation. By using compressed gradients instead of full gradients, the memory usage at each client is greatly reduced. Second, after local training in a FL round, each client merges the local model updates along each subspace and uploads the total model updates to the server. The server then aggregates the model updates from all clients subspace by subspace and send them back to the clients. This strategy enables clients to reconstruct the global model used in the next round, while reducing the communication cost for both uplinks and downlinks. Overall, FedKRSO can achieve communication and memory efficiency similar to PEFT-based approaches, while maintaining model performance comparable to FFT. %
%
%
%
%
%

In summary, our key contributions are as follows:
\begin{itemize}
    \item We introduce FedKRSO, a novel method that employs random subspace optimization with a finite set of random seeds, to achieve communication and memory efficient federated fine-tuning of LLMs at the edge, while maintaining high model performance. 
    
    
    \item We perform rigorous theoretical analyses of FedKRSO and establish its convergence rates under non-convex loss and heterogeneous data distributions in federated settings.  
    
    \item We conduct extensive experiments across multiple models, datasets and data heterogeneity settings to demonstrate the advantages of FedKRSO over state-of-the-art baselines in terms of both model performance and communication and memory efficiency. 
\end{itemize}

\section{Related Work}\label{sec: related_work}

\subsubsection{Memory-Efficient Centralized Optimization Methods for LLMs}

PEFT methods are essential for adapting large pre-trained models under resource constraints. Early techniques like Adapter~\cite{houlsby2019parameter} and Prefix-Tuning~\cite{li2021prefix} enable task-specific adaptation with small parameter updates. LoRA {has emerged} as the most widely adopted method. These methods either update or add a relatively small number of tunable parameters to reduce the memory and computation cost. As a trade-off, such a restriction may prevent the algorithms from finding models with high accuracy that lie outside the searchable space. {It has been noted} in several recent works that they may not reach a performance comparable to {FFT} \cite{lialin2023scaling, chen2022revisiting, pu2023empirical}.

Zeroth-order (ZO) optimization achieves memory efficiency by estimating gradients using finite differences of function values \cite{malladi2023fine}. Various ZO algorithms {have been developed} by adapting first-order (FO) methods, including ZO-SGD \cite{liu2019signsgd} and ZO-Adam \cite{chen2019zo}. Although conceptually simple, these approaches often suffer from high variance and slow convergence, particularly due to the high dimensionality of the underlying models~\cite{zhang2024revisiting}.

Algorithms more related to our local training procedure are subspace methods, such as~\cite{hao2024flora,zhao2024galore,chen2025memory,liu2025optimization,he2024subspace}, which aim to bridge the gap between PEFT and {FFT} by approximating its updates. These methods either compress the full gradient or optimize the model restricted to randomly generated low-dimensional subspaces. Although doing so reduces the memory for storing the optimizer states, they still require computing and storing the full gradient once in a while. This will result in significant peak memory usage and limit their applicability to edge devices with memory constraints, whereas our method overcomes this limitation. We refer the readers to Remark~\ref{rmk:flora} and~\ref{rmk:RSO} for more details.

\subsubsection{Federated Fine-Tuning for LLMs}

Building on  the memory-efficient centralized  methods introduced earlier, various federated fine-tuning methods of LLMs {have been proposed}, which can be roughly divided into three categories. 

One line of work integrates LoRA into FL to reduce communication cost~\cite{infocommzhenxiao, sun2024improving, zhang2024towards, wang2024flora, yan2024frlora, bai2024federated, babakniya2023slora}, and can be categorized by whether they assume homogeneous or heterogeneous LoRA ranks among clients. In the homogeneous setting, all clients use the same LoRA rank during local model training. For example, FedIT~\cite{zhang2024towards} and its variants~\cite{yan2024frlora, babakniya2023slora} integrate LoRA-based local fine-tuning of LLMs directly into standard FL procedures. In the heterogeneous setting, methods such as FLoRA~\cite{wang2024flora} and FlexLoRA~\cite{bai2024federated} assign different LoRA ranks to clients to handle system heterogeneity. FedSA-LoRA \cite{guo2025selective} shares only one of the LoRA low-rank matrices with the server for aggregation, while keeping the other matrix local to each client to handle heterogeneity. 

Another line of work applied ZO during local model training in FL to achieve memory efficiency\cite{fang2022communication}. For example, FwdLLM~\cite{xu2023fwdllm} eliminates the need for gradient computation, enabling scalable training on mobile devices. Similarly, FedKSeed~\cite{qin2023federated} proposes a seed-reuse technique to reduce per-round communication costs. These methods share similar limitations to their centralized counterparts.

There are recent works~\cite{zhaoseparate,shu2024ferret} that adopt the idea of subspace compression. In Ferret~\cite{shu2024ferret}, clients perform {FFT} for the local training and  project the model changes for each communication round to a finite set of random directions, making it sufficient to send the projected coordinates to the server. SEPARATE~\cite{zhaoseparate} also computes the full gradients locally and applies gradient compression with error-feedback for the communication step. The variable introduced to compensate for the compression error is of the same size as the full gradient. These methods focus on improving communication efficiency, but put less emphasis on memory costs.

\section{Preliminaries}\label{sec: background}

\subsection{Federated Fine-Tuning}

Consider an FL system for fine-tuning an LLM that consists of a server and $N$ clients (e.g., edge devices). Each client $n \in [N]$ has a local private dataset $\mathcal{D}_n$. Federated fine-tuning aims at collaboratively tuning model $W$ with the pre-trained weight $W^0 \in \mathbb{R}^{d_m \times d_n}$ at initialization, which can be formulated as
\begin{equation}\label{eq: original_fl}
\min_{W}~\left\{F(W):= \frac{1}{N} \sum_{n=1}^NF_n(W) \right\},
\end{equation}
where $F(W)$ is the global objective function, and $F_n(W) := \mathbb{E}_{\xi_n \sim \mathcal{D}_n}[f_n(W;\xi_n)]$ is the local objective function of client $n$ with $\xi_n$ being a datapoint sampled from distribution $\mathcal{D}_n$ and $f_n (W;\xi_n)$ being the loss of model $W$ evaluated on $\xi_n$, respectively.

The classical FL algorithm, FedAvg \cite{mcmahan2017communication}, solves \eqref{eq: original_fl} via multiple rounds of local training followed by global model aggregation. Specifically, in each round $t$, client $n \in [N]$ performs $J$  stochastic gradient descent (SGD) steps given by
\begin{align}\label{alg:FFT}
    W^{t,j+1}_{n} = W^{t,j}_{n} - \eta \cdot \hat{\nabla} f_n(W^{t,j}_{n}), \ j = 0, \ldots, J -1,
\end{align}
where $W^{t,j}_{n}$ is the local model of client $n$ at local step $j$, and $\hat{\nabla} f_n(W^{t,j}_{n}):= \nabla f_n (W^{t,j}_{n}, \xi_n^{t,j} )$ is a mini-batch stochastic gradient of $F_n$ evaluated at $W^{t,j}_{n}$, and $\eta$ is a learning rate. After local training, the server averages all local models sent from the clients and broadcasts it, which gives the following initialization for the next round:
\begin{align}\label{eq:server-avg}
W_{n}^{t+1,0}=\frac{1}{N}\sum_{n^\prime = 1}^N W^{t,J}_{n'} \textcolor{blue}{.}  
\end{align}

However, a major challenge in applying the above federated {FFT} to LLMs lies in the extremely high communication, computation, and memory overheads due to the massive parameter size of LLMs. For example, fine-tuning a Llama 7B model using the BF16 numerical format requires at least 58 GB of memory~\cite{zhao2024galore}. 
In addition to the substantial GPU memory required to store and update the full set of model parameters, federated {FFT} also incurs significant communication overhead, as the entire model (14 GB for trainable parameters) must be transmitted between clients and the central server in each communication round. As a result, {FFT} in federated settings is both memory and communication-intensive, rendering it impractical for real-world deployment in resource-constrained edge environments. 

To address these bottlenecks, recent methods attempt to reduce the memory and communication overheads by training and transmitting only some lightweight adapters, particularly LoRA \cite{hu2022lora}. However, while LoRA-based approaches are more memory and communication-efficient, empirical results show that they typically fall short of the performance achieved by {FFT}~\cite{chen2022revisiting, pu2023empirical}. Therefore, LoRA-based federated fine-tuning methods like FedIT \cite{zhang2024towards}, though efficient, are not yet on par with federated {FFT} in terms of model performance.

\section{Design of FedKRSO}\label{sec:fl_basic}

To address the challenges of federated fine-tuning, this section proposes a novel method named FedKRSO, which is designed with two primary objectives: (1) to approach the performance of federated {FFT}; and (2) to maintain or lower the computation, communication, and memory overhead compared to federated LoRA fine-tuning. We first give an overview of the method in Sec.~\ref{sec:FedKRSO-alg} and then elaborate on the design details in Sec.~\ref{sec:subspace_training} and Sec.~\ref{sec:kseed}. The communication and memory efficiency of FedKRSO is analyzed in Sec.~\ref{sec:comm_memory}. 

\subsection{Overview of FedKRSO}\label{sec:FedKRSO-alg}
\begin{figure}[t]
    \centering
    \subfloat[The overall workflow of FedKRSO. The color of the accumulators represents different seeds, while the patterns within the shapes indicate different clients (Algorithm~\ref{alg:fedkrso}).]{
        \includegraphics[width=0.96\linewidth]{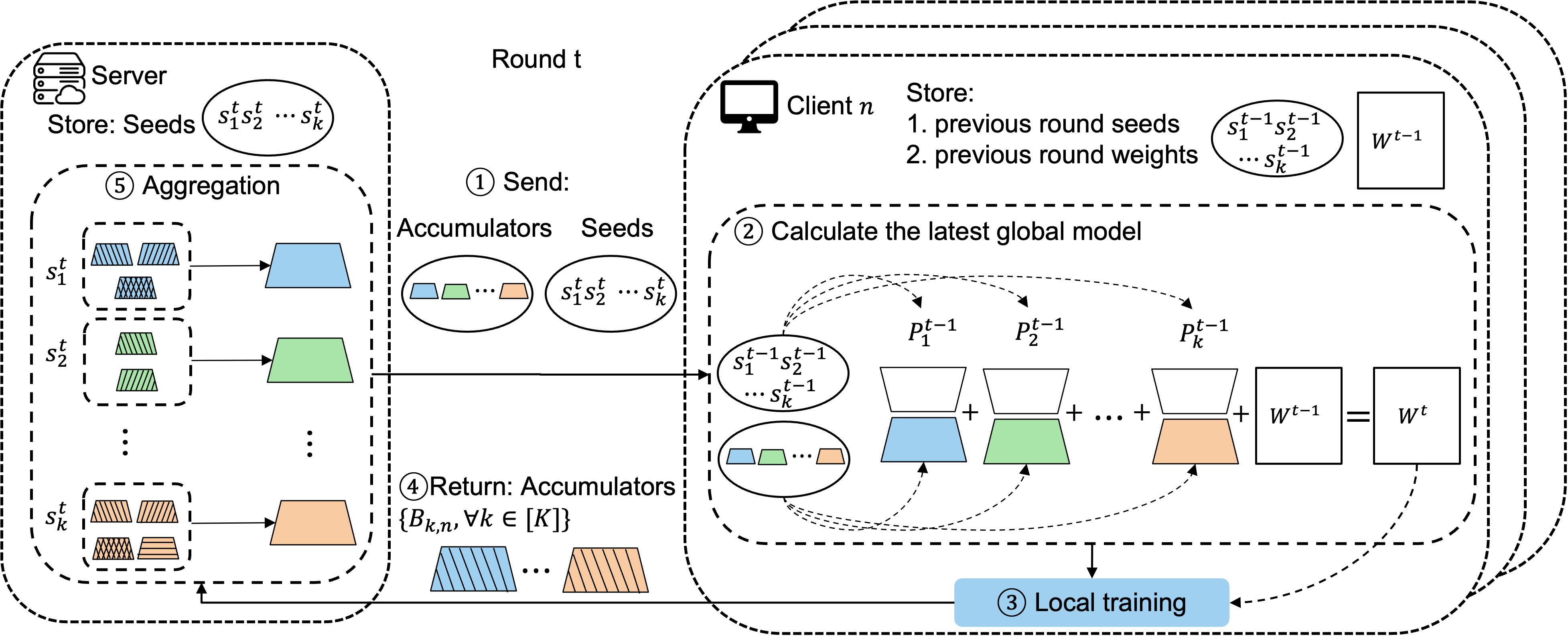}
        \label{fig:overview_a}
    }
    \hfill
    \subfloat[The memory-efficient local training process, which includes $I$ intervals, each consisting of $J$ local iterations. The reconstructed latest global model $W^t$ is used as the initialization for local training (Algorithm~\ref{alg:local training}).]{
        \includegraphics[width=0.96\linewidth]{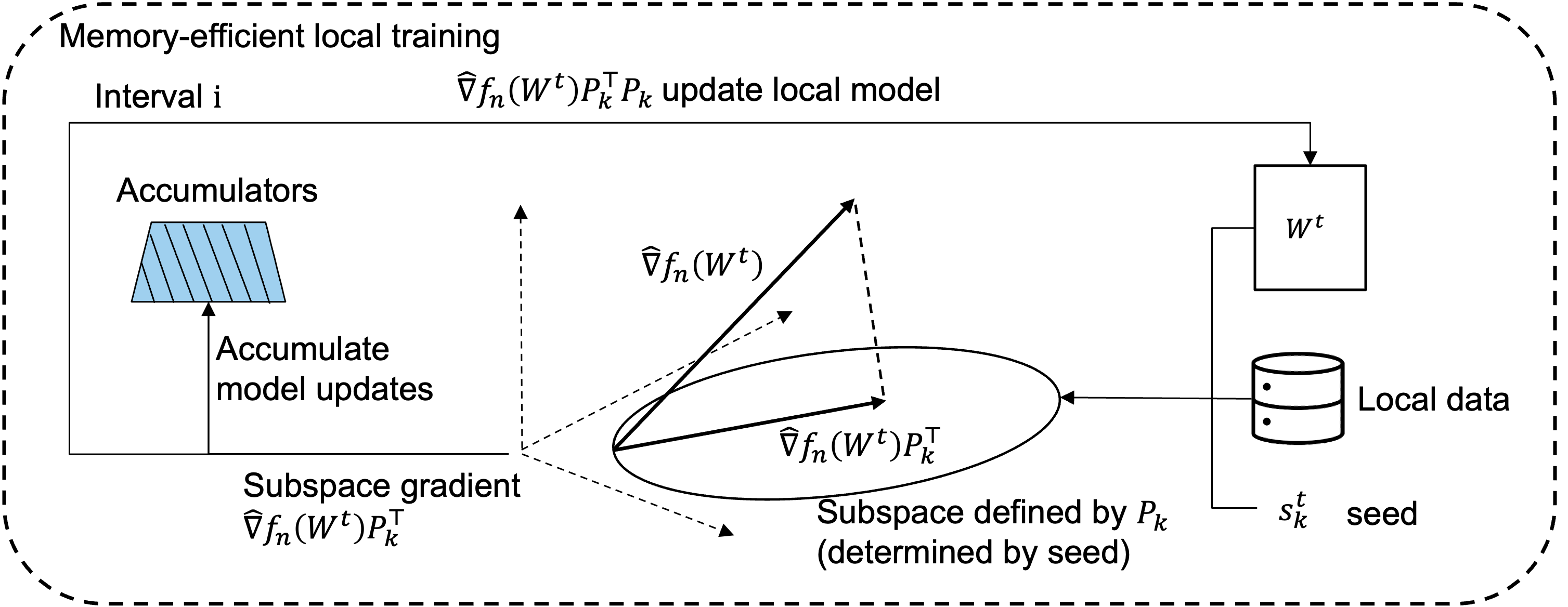}
        \label{fig:overview_b}
    }
    \caption{Overview of FedKRSO at FL round $t$.}
    \label{fig:overview}
\end{figure}

In FedKRSO, the server maintains a pool of \( K \) \emph{random seeds} for subspace generation, denoted as  $\mathcal{S} := \{s_1, s_2, \dots, s_K\}$, along with a corresponding set of $K$ model update \emph{accumulators}, each of size $d_m \times r$, denoted as $\mathcal{B} := \{{B}_1, {B}_2, \dots, {B}_K \}$. 
Both $\mathcal{S}$ and $\mathcal{B}$ are updated in each FL round. Each client, on the other hand, is initialized with a copy of the base model $W^0$. As depicted in Fig.~\ref{fig:overview}, for each FL round $t$, FedKRSO proceeds as follows:
\begin{enumerate}
\item \textbf{Seed and accumulator broadcast.} The server sends the set of random seeds \( \mathcal{S}^{t} \) and accumulators of model update \( \mathcal{B}^t \) to all clients (Line~\ref{line:broadcast} in Algorithm~\ref{alg:fedkrso}).

\item \textbf{Model reconstruction.} Each client uses \( \mathcal{B}^t \) and \( \mathcal{S}^{t-1} \) received in the \emph{previous round} to reconstruct the latest global model \( W^{t} \), which is then used to initialize its local model (Line~\ref{line:reconstruction} in Algorithm~\ref{alg:fedkrso}).

\item \textbf{Local training along random subspaces.} Local training is divided into $I$ intervals, each consisting of $J$ local iterations. %
At the beginning of each interval, client $n$ randomly samples a seed \( s_k \in \mathcal{S}^{t} \), generates the corresponding random matrix \( P_k \), and performs $J$ steps of local updates on the current model $W$ using the compressed gradient $G_B$ in the subspace defined by \( P_k \). (Lines~\ref{alg:grad-compression}-\ref{eq:alg2-W} in Algorithm~\ref{alg:local training}). Meanwhile, the compressed gradient is accumulated into the model update accumulator $B_{k, n}$ corresponding to seed \( s_k \) (Line~\ref{eq:update-grad-acc}  in Algorithm~\ref{alg:local training}). At the end of local training, we will reset the local model weights to their initial values (Line~\ref{alg:weight-reset} in Algorithm~\ref{alg:local training}) in order to reconstruct the latest global model in Step 2. 

\item \textbf{Local update uploading.} Once finishing the local training stage, each client $n$ sends at most $K$ accumulators $\mathcal{B}_n^{t+1} := \{B_{k, n}^{t+1}, \forall k \in [K]\}$ back to the server.

\item \textbf{Server-side model aggregation.}  
After receiving the updates \( \mathcal{B}^{t+1}_n \) from all clients, the server computes the global accumulator set \( \mathcal{B}^{t+1} \) by averaging the local ones \( {B}^{t+1}_{k,n} \) from all clients seed by seed (Line~\ref{line:aggregation} in Algorithm~\ref{alg:fedkrso}).
\end{enumerate}
The pseudo code for the overall procedure is shown in Algorithms~\ref{alg:fedkrso} and~\ref{alg:local training}.

\begin{algorithm}[h]
\caption{FedKRSO Algorithm}
\label{alg:fedkrso}
\begin{algorithmic}[1]
\small
\STATE \textbf{Input}: Number of clients $N$, communication rounds $T$, pretrained model weight matrix $W^0 \in \mathbb{R}^{d_m\times d_n}$, rank $r$, number of interval $I$, interval length $J$.

\STATE \textbf{Initialize}: global model update accumulators $B_k^{0} = 0$  and $P_k^{-1} = 0  \ \forall k \in [K]$.

\FOR{round $t = 0$ to $T-1$}
    \STATE Server selects a  set of $K$ random seeds $\mathcal{S}^{t}$\label{line:seed-gen}
     
    \STATE Server sends $\mathcal{B}^{t} := \{B_k^{t}\}_{k = 1}^K$ and $\mathcal{S}^{t}$ to all clients \label{line:broadcast}
    
    \FOR{client $n = 1$ to $N$ in parallel}
    
        \STATE Specify $\mathcal{P}^t := \{P_k^{t-1}\}_{k \in [K]}$ according to $\mathcal{S}^{t-1}$
        
        \STATE Reconstruct the  global model to initialize local training as 
        \[
        W^{t} = W^{t-1} + \sum_{k = 1}^K B_k^{t}  P_k^{t-1}
        \] \label{line:reconstruction}
     
        \STATE $\mathcal{B}^{t+1}_n = \textbf{LocalTraining} (W^{t}, \mathcal{S}^{t}, n)$ \label{line:local_training}

        \STATE Upload $\mathcal{B}^{t+1}_n$ to the server \label{line:upload}
    \ENDFOR\\
    
    \STATE Server aggregates:
    \[
    B^{t+1}_k = \frac{1}{N} \sum_{n=1}^{N} B^{t+1}_{k,n}, \forall k \in [K]
    \] \label{line:aggregation}
\ENDFOR

\STATE \textbf{Output}: $W^{T}$
\end{algorithmic}
\end{algorithm}

\begin{algorithm}[h]
\caption{\textbf{LocalTraining}$(W, \mathcal{S}, n)$}
\label{alg:local training}
\begin{algorithmic}[1]
\small
\STATE \textbf{Initialize:} local model update accumulators set $\mathcal{B}_n = \{B_{1, n}, B_{2, n}, \ldots, B_{K, n}\}$ with $B_{k, n} = 0, \forall k \in [K]$, number of intervals $I$, interval length $J$, learning rate $\eta$, rank $r$, decay rates $\beta_1, \beta_2$, coefficient $\epsilon$, fist-order moment $M\leftarrow 0$, second-order moment $V \leftarrow 0$.

\FOR{interval $i = 1$ to $I$}
    \STATE Uniformly sample a seed $s_k \in \mathcal{S}$ 
    \STATE Specify $P_k \sim \mathcal{N}(0, (1/r) \mathbb{I}_{r \times d_n})$ using seed $s_k$, $P_k \in \mathbb{R}^{r \times d_n}$ \label{eq:sample-seed}
    \STATE $M, V \gets 0$
    \FOR{iteration $j = 1$ to $J$}
        \STATE Sample a mini-batch from local dataset $\mathcal{D}_n$
       \STATE Compute $\hat{\nabla} f_n(W) P_k^{\top}$ according to Eq.~\eqref{eq:partial grad} and let $G_B \gets \hat{\nabla} f_n(W) P_k^{\top}$ \label{alg:grad-compression}
        \STATE $M\gets \beta_1 \cdot M+(1-\beta_1)\cdot G_B$\label{eq:update-M} 
        \STATE $V\gets \beta_2\cdot V +(1-\beta_2)\cdot G_B^2$
        \STATE $M \gets M/(1-\beta_1)$
        \STATE $V \gets V /(1-\beta_2)$
        \STATE $G_B \gets M/(\sqrt{V}+ \epsilon)$\label{eq:update-G}
        \STATE $W \gets W - \eta G_B P_k$\label{eq:alg2-W}
        \STATE $B_{k, n} \gets B_{k, n} - \eta G_{B}$\label{eq:update-grad-acc}
    \ENDFOR
\ENDFOR
\STATE Reset initial model: $W \gets W - \sum_{k = 1}^K B_{k,n}P_k$.\label{alg:weight-reset}
\STATE \textbf{Output}: $\{{B}_{k, n}, \forall k \in [K]\}$
\end{algorithmic}
\end{algorithm}


\subsection{Memory-Efficient Local Training}\label{sec:subspace_training}

In this section, we elaborate on the local training step presented in Algorithm~\ref{alg:local training}. We start with $I = 1$, which means that each client $n$ randomly selects only one seed from the pool $\mathcal{S}$. We simplify the notations here by dropping the dependency of the local variables, such as $P_k$, on the client index $n$ and round index $t$ without introducing ambiguity. Recall that in each FL round $t$, the local update of {FFT} is given by~\eqref{alg:FFT}. 

Our goal is to approximate the update~\eqref{alg:FFT} but with a memory cost comparable to LoRA, which crucially relies on computing the stochastic gradient in a memory-efficient manner. To this end, the proposed algorithm leverages the random sketching technique for gradient compression, combined with a carefully designed mechanism that enables computation of the compressed gradient \emph{without explicitly evaluating the full gradient}.\\[.2ex]

\noindent\textbf{Gradient compression.} Let $W$ be the model to be updated by client $n$ and $\hat{\nabla} f_n(W)$ be the corresponding stochastic gradient. To reduce the size of the gradient, we compress it using a linear sketch  given by
\begin{align}\label{eq:compressed_grad}
    G_B = \hat{\nabla} f_n(W) P^\top,
\end{align}
where $P \in \mathbb{R}^{r \times d_n}$ with $r \ll \min\{d_m,d_n\}$ is sampled randomly from a distribution satisfying 
$\mathbb{E}[P^\top P] = I_{d_n}$, independent from $\hat{\nabla} f_n(W)$. It is then not hard to verify that $\mathbb{E}[G_B P] = \mathbb{E}[ \hat{\nabla} f_n(W) P^\top P ] = \mathbb{E} [\hat{\nabla} f_n(W)]$ suggesting that we can reconstruct the full gradient in expectation via computing $G_B P$ and updating $W$ as
\begin{equation}\label{alg:compress-update}
W \gets W - \eta G_B P.
\end{equation}
\begin{remark}\label{rmk:local-memory-cost}
    Note that to implement~\eqref{alg:compress-update}, it suffices to store the local model $W$ and the two factors  $G_B$ and $P$, which are much smaller than the full gradient. The update of $W$ can be done in-place instead of first computing the product $G_B P$ and then storing it. Therefore, the total memory cost of this step is $d_m \times d_n + (d_m + d_n) \times r$. 
\end{remark}

\noindent\textbf{Compute the compressed gradient $G_B$.}  Notably, naively implementing the gradient compressor~\eqref{eq:compressed_grad} requires evaluating $\hat{\nabla} f_n(W)$, making the cost same as {FFT}. In what follows, we address the challenge and introduce a method for computing $G_B$ with a cost comparable to LoRA via low-rank parameterization.

We consider the loss function $F_n(W + B P)$, where \emph{constants} $W$ and $P$ are, respectively, the local model and the matrix for compression used in~\eqref{eq:compressed_grad}, and $B$ is the variable. Applying the chain rule, the gradient of $F_n$ with respect to $B$ is  given by:
\begin{align}\label{eq:partial grad}
\nabla_B F_n(W + B P) = \nabla F_n(W + B P) P^\top.
\end{align}
Comparing~\eqref{eq:partial grad} to~\eqref{eq:compressed_grad}, we can see that $G_B$ can be computed by evaluating $\hat{\nabla}_B f_n(W + B P)$ on the same mini-batch of data  as~\eqref{eq:compressed_grad}, with $W$ set to be the current model and $B = 0$.

\begin{remark}
 LoRA reparameterizes $W$ as $W = BA$ to save the memory cost. The model is updated by solving $\min_{A,B}\,F_n(W^0 + BA)$, which requires computing the partial gradients of $F_n$ with respect to $A$ and $B$ for every iteration. Comparing~\eqref{eq:partial grad} to LoRA reveals that our method reduces the gradient evaluation cost by half per iteration.
\end{remark}
Repeating the iteration~\eqref{alg:compress-update} for $J$ times gives the basic SGD version of Algorithm~\ref{alg:local training} for $I = 1$, whose convergence proof will be provided in Sec.~\ref{sec:convergence}.\\[.2ex]
\noindent\textbf{Change projection direction and momentum acceleration.} We proceed to introduce two extensions of the algorithm, which give the complete description of Algorithm~\ref{alg:local training}. 

In the basic version, we use a fixed $P$ to compress the gradient. The approach can be extended to using multiple $P$s. Specifically, for each round $t$, the server generates a set of seeds $\mathcal{S}^t$ to determine these matrices (Line~\ref{line:seed-gen}-\ref{line:broadcast} in Algorithm~\ref{alg:fedkrso}). Then in local training,
for each $i \in [I]$, each client $n$ samples a seed from $\mathcal{S}$ independently uniformly at random, indexed by $k$, to generate $P_{k}$ (Line~\ref{eq:sample-seed} in Algorithm~\ref{alg:local training}). Furthermore, to accelerate convergence, we introduce state variables $M$ and $V$ updated according to Line~\ref{eq:update-M} to~\ref{eq:update-G} in Algorithm~\ref{alg:local training}. The update mimics Adam \cite{kingma2014adam} but accumulates the first and second momentum of the compressed gradient $G_B$, making their storage cost  $d_m \times r$ rather than $d_m \times d_n$. The update of local model $W$ is then given by Line~\ref{eq:alg2-W} in Algorithm~\ref{alg:local training} using the preconditioned gradient. Once the client finishes $J$ times local updates, it resamples $P_k$ and resets $M$ and $V$ to zero.

The following two remarks compare our proposed local updates to two recent fine-tuning algorithms,  both of which apply \emph{only to the centralized setting}.

\begin{remark}[Comparison to Flora  \cite{hao2024flora}]\label{rmk:flora}
    The idea of gradient compression is also used in Flora. The key difference is that  Flora still requires computing the full gradient once in a while, whereas our method directly computes the compressed gradient via~\eqref{eq:partial grad}. The significant reduction of the peak memory cost makes our algorithm suitable for implementation on resource-constrained edge devices for federated learning.
\end{remark}

\begin{remark}[Comparison to RSO \cite{chen2025memory}]\label{rmk:RSO}
  In view of~\eqref{eq:compressed_grad} and~\eqref{eq:partial grad}, we can interpret the local update~\eqref{alg:compress-update} as minimizing $F_n$ along a subspace of weights parameterized by $W = BP$. The algorithm shares similarities with RSO but has key differences. RSO minimizes Problem~\eqref{eq: original_fl} subspace by subspace via the iterates: $B_k \approx \arg\min_{B} F_n (W_k + BP_k) + \frac{1}{2 \eta^k} \|B\|_F^2$ and $W_{k+1} = B_k P_k$. To compute $B_k$, RSO requires finding the exact minimizer up to an $\epsilon$-accuracy. This requires $\eta^k$ to be chosen small enough to make the objective convex, which introduces a hyperparameter whose value is often difficult to estimate in practice. Another difference is that RSO is a double-loop algorithm with computing $B_k$ being the inner loop. The convergence accuracy of RSO is determined by $\epsilon$, which means that achieving high accuracy requires a large number of iterations for the inner loop, and hence a high computational cost.  In contrast, ours is essentially a single-loop algorithm since $J$ can be an arbitrary fixed constant (notably, $J$ can be set to 1). 
\end{remark}

We finish this section by explaining the function of model update accumulators $B_{k,n}$ along with their update given by Line~\ref{eq:update-grad-acc}. The model update in Line~\ref{eq:alg2-W} implies that for each  $i \in [I]$, model change is equal to the accumulated gradients in the interval. Since matrix $P_k$ is fixed over the $J$ iterations,  it suffices to use $B_{k,n}$ to accumulate the compressed gradients to reproduce the model change as $B_{k,n}P_k$ in this interval. The set $\mathcal{B}_n^{t+1}$ thus fully encodes the model change of each client $n$, and is sent to the server after one round of local updates. This step will be useful for implementing the server-side update described subsequently. 

\subsection{Communication-Efficient Model Aggregation and Dissemination}\label{sec:kseed}

Section~\ref{sec:subspace_training} presented a memory-efficient approximation of the local updates in federated fine-tuning given by~\eqref{alg:FFT}. Next, we describe how to perform the model aggregation and dissemination step~\eqref{eq:server-avg} without communicating the full model weights using the $K$-seed mechanism.

Since the local models of all clients are synchronized to be $W^t$ at the beginning of round $t$, the model aggregation~\eqref{eq:server-avg} can be rewritten as\footnote{Note  that $W_n^{t,0} = W^t$ for all $n$ and $t$.} $W^{t+1} = W^{t} + (1/N)\sum_{n = 1}^N \Delta W_n^t$,
$\text{where}\ \Delta W_n^t:= W^{t,J}_{n} - W^{t}$. With $\mathcal{B}_n^{t+1}$ sent by each client $n$ and the random seeds $\mathcal{S}^t$, the server can thereby compute the model change as 
\begin{equation}
\frac{1}{N}\sum_{n = 1}^N \Delta W_n^t = \frac{1}{N}\sum_{n = 1}^N \left( \sum_{k=1}^K B_{k,n}^{t+1}P_k^t\right),
\end{equation}
where $B_{k,n}^{t+1}$ denotes the value of the accumulator $B_{k,n}$ at the end of round $t$ and $P_k^t$ is the random matrix generated by seed $s_k^t$. By exchanging the order of the summation, we find that the server can first aggregate $B_{k,n}^{t+1}$ for each seed $k \in [K]$ over the clients using a global accumulator $B_k^{t+1}$ as specified by Line~\ref{line:aggregation} in Algorithm~\ref{alg:fedkrso}. 
The global model update after round $t$ is therefore 
\begin{align}\label{eq:gloal-update}
    W^{t+1} = W^t + \sum_{k = 1}^K B_k^{t+1}P_k^t.
\end{align}
Since the clients already have the information on $\mathcal{S}^t$, it suffices to broadcast the set of global accumulators $\mathcal{B}^{t+1}: = \{B_k^{t+1}\}_{k = 1}^K$ for each client to implement~\eqref{eq:gloal-update}, as given in Line~\ref{line:broadcast}-\ref{line:reconstruction} in Algorithm~\ref{alg:fedkrso}.

Lastly, we mention that since the clients modify the model weights $W$ in-place, the initial model $W^t$ will be erased at the end of  local training. Therefore, we need to recover $W^t$ to implement~\eqref{eq:gloal-update} as specified in Line~\ref{alg:weight-reset} in Algorithm~\ref{alg:local training}. 

\subsection{Communication Overhead and Memory Footprint}\label{sec:comm_memory}
\begin{table}[thbp]
\centering
\vspace*{-10pt}
\caption{Memory and per-round communication cost comparison in terms of number of parameters. Here $P = d_m \times d_n$, $L = (d_m + d_n) \times r$, $Q = d_m \times r$.}
\label{tab:memory_communication_comparison}
\begin{tabular}{lccccc}
\toprule
& & \textbf{FedIT} & \textbf{FFA-LoRA} & \textbf{FedFFT} & \textbf{FedKRSO} \\
\midrule
\multirow{3}{*}{{Mem.}} 
& Weights & $P + L$ & $P + L$ & $P$ & $P + L$ \\
& Gradients & $L$ & $Q$ & $P$ & $Q$ \\
& Opt. states & $2L$ & $2Q$ & $2P$ & $2Q$ \\
\midrule
\multirow{2}{*}{{Comm.}} 
& Uplink & $L$ & $Q$ & $P$ & $IQ$ \\
& Downlink & $L$ & $Q$ & $P$ & $KQ + K$ \\
\bottomrule
\end{tabular}
\end{table}
The theoretical communication and memory analysis for the different algorithms was conducted on a single linear layer with weight matrix $W_l \in \mathbb{R}^{d_m \times d_n}$.
We first analyze the communication cost of FedKRSO. Since each client at most selects $I$ different $P_k$ matrices and uploads the corresponding $B_{k,n}$ in each round, the uplink communication cost for each client is bounded by $I \times d_m \times r$. On the other hand, the server broadcasts $\mathcal{B}^t$ and $\mathcal{S}^t$, which gives communication overhead of $K \times d_m \times r + K$ parameters.  

As for the memory footprint, the cost on the client side for storing the weights is $d_m \times d_n + (d_m + d_n) \times r$ as given in Remark~\ref{rmk:local-memory-cost}, and that for the compressed gradients and optimizer states is $3 \times d_m \times r$ \footnote{Here we have not accounted for the cost of activations, which depends on the model architecture and the batch size.}. A comparison with existing methods is provided in Table~\ref{tab:memory_communication_comparison}.

\section{Convergence Analysis}\label{sec:convergence}
In this section, we establish the convergence guarantee of FedKRSO in Algorithm~\ref{alg:fedkrso} under the SGD setting with  $I=1$ in Algorithm~\ref{alg:local training}.  That is, at each communication round $t$, every client $n\in[N]$ draws a single seed
$
k(n,t)\;\sim\;\mathrm{Unif}\{1,2,\dots,K\},$
and then performs $J$ steps of SGD using a random matrix denoted as $P^t_{k(n,t)}$. The generalization to cases $I > 1$ and with momentum is more involved and is left as  future work. 

To simplify notation, we omit the dependency of $k(n,t)$ on $t$, and denote it as $k(n)$ without affecting the analysis. Let $B^{t,j}_{k(n),n},\quad j=0, 1,\dots,J,\ n \in [N]$
denote the model update accumulators of client $n$ in its $j$‑th local SGD step of round $t$, using seed $k(n)$.  Then the intermediate local model is given by $W_n^{t,j}=
W^t + B^{t,j}_{k(n),n} P^t_{k(n)}$. We make the following standard assumptions on the loss functions.
\begin{assumption}[$L$-smoothness]
\label{assump:smoothness}
Each local loss function $F_n$, $n\in[N]$, is differentiable and has $L$-Lipschitz continuous gradient. That is, $\forall n \in [N], \forall $
$W_1, W_2 \in \mathbb{R}^{d_m \times d_n}$,
\begin{align} \label{eq-Lsmooth}
\bigl\|\nabla F_n(W_1) - \nabla F_n(W_2)\bigr\|_F 
\;\le\; 
L\,\bigl\|W_1 - W_2\bigr\|_F.    
\end{align}
\end{assumption}

\begin{assumption}[Bounded noise variance] 
\label{assump:stoch_grad} 
Each client $n\in[N]$ has access to unbiased stochastic gradients $\hat{\nabla} f_n(W)$ satisfying $
\mathbb{E}_{\xi_n\sim\mathcal{D}_n}\bigl[\hat \nabla f_n(W)\bigr]
= \nabla F_n(W) $ with bounded variance
\begin{align}
\mathbb{E}_{\xi_n\sim\mathcal{D}_n}\bigl\| \hat \nabla f_n(W) - \nabla F_n(W)\bigr\|_F^2
\le \sigma^2, \ \forall\,W\in\mathbb{R}^{d_m\times d_n}.
\end{align}
\end{assumption}

We consider a Non-IID setting where the data heterogeneity of the clients is characterized by the following assumption.
\begin{assumption}[Bounded gradient heterogeneity]
\label{assump:heter}
We assume that there exists constant $\varsigma$ such that $\forall \,W \in \mathbb R^{d_m \times d_n}$,
\begin{align}
 \frac 1 N \sum_{n=1}^N \left\|\nabla F_n(W)\right\|_F^2 \leq \varsigma^2 + \left\| \nabla F(W) \right\|_F^2.   
\end{align}

\end{assumption}
We further impose the following assumption on random matrices $\{P^t_k\}$, which is also used in existing works \cite{chenenhancing,chen2025memory}.  
\begin{assumption}\label{skect-assum}
The random matrices $\{P^t_k\}, \forall k \in [K], t = 1,\cdots T$, are independent and each is drawn from a distribution such that $P^t_k(P^t_k)^\top = d_n/r I_r$  and $\mathbb E [(P^t_k)^\top P^t_k] = I_{d_n}$.     
\end{assumption}
The assumption can be satisfied with high probability if the elements of $P_k^t$ are i.i.d. drawn from a centered Gaussian distribution and $\min\{d_m, d_n\} \gg r$.

With the assumptions above, we first bound the decrease of the loss function in one communication round. 
\begin{lemma}\label{lemma-1}
If Problem \eqref{eq: original_fl} satisfies Assumption~\ref{assump:smoothness} and~\ref{assump:heter}, then the iterates generated by FedKRSO under Assumption~\ref{assump:stoch_grad} and~\ref{skect-assum} with learning rate $\eta \leq \frac{r}{16Ld_nJ}$ satisfy 
\begin{align}\label{des-lemma}
\mathbb{ E} \left[ F(W^{t+1}) \right] & \leq \mathbb{ E} \left[ F(W^{t}) \right] + \frac{2Ld_nJ^2\eta^2}{r} \varsigma^2  + \frac{\eta d_n^2L^2}{r^2} Q^t \nonumber \\
& \quad - \frac{3J\eta}{8} \mathbb{E} \left[ \left\|\nabla F(W^t)\right\|_F^2\right]  + \frac{LJ\eta^2d_n^2 \sigma^2}{N r^2},
\end{align}
where $Q^t: = \frac{1}{N}\sum_{n=1}^N\sum_{j=0}^{J-1} \mathbb E \left[ \left\|   B^{t,j}_{k(n),n} P^t_{k(n)}\right\|_F^2 \right]$.
\end{lemma}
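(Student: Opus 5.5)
The plan is a one-round descent argument from $L$-smoothness of $F$, followed by separate control of the linear and quadratic terms. Write the round increment using the global update \eqref{eq:gloal-update} with $I=1$:
\[
W^{t+1}-W^t=\frac1N\sum_{n=1}^N B^{t,J}_{k(n),n}P^t_{k(n)}
=-\frac{\eta}{N}\sum_{n=1}^N\sum_{j=0}^{J-1}\hat\nabla f_n(W_n^{t,j})(P^t_{k(n)})^\top P^t_{k(n)} .
\]
Denote $\Pi_n:=(P^t_{k(n)})^\top P^t_{k(n)}$. Assumption~\ref{skect-assum} gives $\mathbb E\Pi_n=I$. It also gives $\Pi_n^2=(d_n/r)\Pi_n$, so $\|X\Pi_n\|_F^2=(d_n/r)\langle X,X\Pi_n\rangle$ and $\mathbb E\|X\Pi_n\|_F^2\le (d_n/r)\|X\|_F^2$ for $X$ independent of $P$. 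Smoothness then yields
\[
\mathbb E F(W^{t+1})\le \mathbb E F(W^t)+\mathbb E\langle\nabla F(W^t),W^{t+1}-W^t\rangle+\tfrac L2\mathbb E\|W^{t+1}-W^t\|_F^2 .
\]

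\textbf{Linear term.} Replace each stochastic gradient by its mean; this is valid by Assumption~\ref{assump:stoch_grad} and tower conditioning, since $W_n^{t,j}$ is determined before $\xi_n^{t,j}$. Split $\nabla F_n(W_n^{t,j})=\nabla F_n(W^t)+(\nabla F_n(W_n^{t,j})-\nabla F_n(W^t))$. In the first piece $\Pi_n$ is independent of $W^t$, so $\mathbb E\Pi_n=I$ gives exactly $-J\eta\,\mathbb E\|\nabla F(W^t)\|_F^2$. The drift piece is harder because $\Pi_n$ is correlated with $W_n^{t,j}$. I would use Young's inequality $\langle a,b\Pi\rangle\le \frac14\|a\|^2+\|b\Pi\|^2$ and the bound $\|b\Pi_n\|_F\le (d_n/r)\|b\|_F$, which holds deterministically since the operator norm of $\Pi_n$ equals $d_n/r$. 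Together with $L$-smoothness, $\|\nabla F_n(W_n^{t,j})-\nabla F_n(W^t)\|_F\le L\|B^{t,j}_{k(n),n}P^t_{k(n)}\|_F$, the drift is bounded by $\frac{J\eta}{4}\|\nabla F(W^t)\|^2+\eta\frac{d_n^2L^2}{r^2}\frac1N\sum_{n,j}\|B^{t,j}P\|_F^2$. This is the origin of the $\frac{\eta d_n^2L^2}{r^2}Q^t$ term. Up to constants in the Young split, it leaves roughly $-\frac{3J\eta}{4}$ or better on the gradient norm.

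\textbf{Quadratic term.} Decompose each stochastic gradient into noise plus $\nabla F_n(W_n^{t,j})$. The noise terms are martingale differences across $j$ and independent across $n$, so their cross terms vanish. With $\|\cdot\,\Pi_n\|\le (d_n/r)\|\cdot\|$ this gives $\frac L2\cdot \frac{\eta^2 J d_n^2\sigma^2}{N r^2}$, at most the stated $\frac{LJ\eta^2d_n^2\sigma^2}{Nr^2}$. The remaining mean part is $\frac L2\eta^2\|\frac1N\sum_{n,j}\nabla F_n(W_n^{t,j})\Pi_n\|^2$. I would split $\nabla F_n(W_n^{t,j})=\nabla F_n(W^t)+\text{drift}$ with factor 2. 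The drift part gives $\eta^2 L\cdot J\frac{d_n^2L^2}{r^2}Q^t$. Since $\eta\le r/(16Ld_nJ)$, this is absorbed into the $\frac{\eta d_n^2L^2}{r^2}Q^t$ term, possibly after adjusting the Young constants earlier. For the part $\|\frac1N\sum_n J\nabla F_n(W^t)\Pi_n\|^2$, I would use independence of the $\Pi_n$ across clients with distinct seeds, and $\mathbb E\|X\Pi\|^2=(d_n/r)\|X\|^2$, to bound it by $J^2(\frac{d_n}{r}\frac1N\sum_n\|\nabla F_n\|^2+\|\nabla F\|^2)$. When clients share a seed the correlations are only favorable up to a factor, and a crude Jensen bound $\frac1N\sum\|\cdot\|^2$ with factor $d_n/r$ also suffices. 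Assumption~\ref{assump:heter} then yields $LJ^2\eta^2\frac{d_n}{r}(\varsigma^2+2\|\nabla F(W^t)\|^2)$, which gives the $\frac{2Ld_nJ^2\eta^2}{r}\varsigma^2$ term. The gradient-norm part is at most $\frac{2Ld_nJ^2\eta^2}{r}\|\nabla F\|^2\le\frac{J\eta}{8}\|\nabla F\|^2$ by the step-size condition.

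\textbf{Combining and main obstacle.} Collecting terms should give $-J\eta+\frac{J\eta}{4}+\frac{J\eta}{8}\le-\frac{3J\eta}{8}$ on the gradient norm, with slack, which yields \eqref{des-lemma}. I expect the main obstacle to be the correlation between $\Pi_n$ and the iterates inside round $t$. One cannot take $\mathbb E\Pi_n=I$ on drift terms, so the operator-norm bound $d_n/r$ must be used there. This is what produces the $d_n^2/r^2$ factor in front of $Q^t$ and the noise term. Bookkeeping the constants so that everything fits under $\eta\le r/(16Ld_nJ)$ is the fiddly part.
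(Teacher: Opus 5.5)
Your plan is essentially the paper's proof: the descent lemma, $\mathbb{E}[\Pi_n]=I$ on the $\nabla F_n(W^t)$ pieces, Young plus the operator-norm bound $d_n/r$ plus smoothness on the drift, and Jensen with $\mathbb{E}\|X\Pi_n\|_F^2=\frac{d_n}{r}\|X\|_F^2$ and Assumption~\ref{assump:heter} in the quadratic term. Two constant-level steps need fixing.

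\textbf{Noise--mean cross terms.} In $\mathbb{E}\|W^{t+1}-W^t\|_F^2$ only the noise--noise cross terms vanish. The noise--mean cross terms do not, because $\nabla F_n(W_n^{t,j'})$ with $j'>j$ depends on $\xi_n^{t,j}$ through the iterate. So you cannot simply add $\frac{L}{2}\cdot\frac{\eta^2 J d_n^2\sigma^2}{Nr^2}$ and $\frac{L\eta^2}{2}\|\frac1N\sum_{n,j}\nabla F_n(W_n^{t,j})\Pi_n\|_F^2$. Split instead with $\|a+b\|_F^2\le 2\|a\|_F^2+2\|b\|_F^2$, as the paper does. The noise term then becomes exactly $\frac{LJ\eta^2d_n^2\sigma^2}{Nr^2}$. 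The mean part gets coefficient $L\eta^2$, hence $2L\eta^2$ on each of its two pieces. The $W^t$ piece, via Jensen, gives $\frac{2Ld_nJ^2\eta^2}{r}(\varsigma^2+\|\nabla F(W^t)\|_F^2)$. The drift piece gives $\frac{2Jd_n^2\eta^2L^3}{r^2}Q^t$.

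\textbf{Young weights.} Your weights $(\frac14,1)$ in the linear drift term already spend the whole budget $\frac{\eta d_n^2L^2}{r^2}Q^t$, so that second drift contribution cannot be absorbed. With the paper's weights $(\frac12,\frac12)$, the $Q^t$ coefficient is $\frac{\eta d_n^2L^2}{2r^2}(1+4LJ\eta)\le\frac{\eta d_n^2L^2}{r^2}$, using $4LJ\eta\le\frac{r}{4d_n}$. The gradient coefficient is $-J\eta+\frac{J\eta}{2}+\frac{J\eta}{8}=-\frac{3J\eta}{8}$, which gives the stated inequality exactly, with no slack.
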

\begin{IEEEproof}
From Algorithm \ref{alg:fedkrso}, there is 
\begin{align}
 W^{t+1} = W^t - \eta \frac{\sum_{n=1}^N B^{t,J}_{k(n),n} P^{t}_{k(n)}}{N}.   
\end{align}
Then, based on the descent lemma under $L$-smoothness Assumption \eqref{assump:smoothness}, there is  
\begin{align}\label{des}
& \mathbb E \left[ F(W^{t+1})\right] \leq \mathbb E \left[ F(W^{t})  + \frac{L}{2}   \left\| \frac{\sum_{n=1}^N B^{t,J}_{k(n),n} P^{t}_{k(n)}}{N} \right\|_F^2\right] \nonumber \\
& \quad + \mathbb E \left[ \left \langle \nabla F(W^t) , \frac{\sum_{n=1}^N B^{t,J}_{k(n),n} P^t_{k(n)}}{N}\right \rangle  \right] \nonumber \\
& = \mathbb E \left[ F(W^{t}) + \frac{L\eta^2} 2  \left\| \frac{ \sum_{n,j} \nabla \hat f_n(W^{t,j}_n) (P^t_{k(n)})^\top P^t_{k(n)}} N  \right\|_F^2  \right] \nonumber\\
& \quad - \eta \mathbb E \left[ \left \langle \nabla F(W^t) ,  \frac{ \sum_{n,j} \hat \nabla f_n (W^{t,j}_n) (P^t_{k(n)})^\top P^t_{k(n)} } N \right \rangle\right] \nonumber\\
& \overset{(a)}{\leq} \mathbb E \left[ F(W^{t})\right] - \frac{J\eta} 2 \mathbb E \left[ \left\| \nabla F(W^t) \right\|_F^2\right] + \frac{LJ\eta^2d_n^2 \sigma^2}{N r^2} + \frac{\eta}{2N} \nonumber \\
& \ \cdot  \sum_{n=1}^N \sum_{j=0}^{J-1}  \mathbb E \left[  \left\|  \left( \nabla F_n ( W^{t,j}_n) \!- \!\nabla F_n(W^t)\right)(P^t_{k(n)})^\top P^t_{k(n)} \right\|_F^2 \right]  \nonumber \\
& \quad + \frac{2L\eta^2}{N^2} \left( \mathbb E \left[ \left\| \sum_{n=1}^N \sum_{j=0}^{J-1}  \nabla F_n(W^t) (P^t_{k(n)} )^\top P^t_{k(n)} \right\|_F^2 \right. \right. \nonumber \\
& \quad \left. \left. +  \left\|\sum_{n,j} \left( \nabla F_n ( W^{t,j}_n) - \nabla F_n \left( W^t \right) \right)(P^t_{k(n)} )^\top P^t_{k(n)}      \right\|_F^2 \right] \right) \nonumber \\
& \overset{(b)}{\leq} \mathbb E \left[ F(W^t)\right] - \frac{J\eta} 2 \mathbb E \left[ \left\| \nabla F(W^t) \right\|_F^2\right]  \nonumber \\
& \quad  + 2LJ^2 \eta^2 \mathbb E \left[ \left\| \sum_{n=1}^N \sum_{j=0}^{J-1}  \frac{\nabla F_n(W^t) (P^t_{k(n)})^\top P^t_{k(n)} }{NJ}\right\|_F^2 \right]  \nonumber \\
& \quad   + \left( \frac{\eta d_n^2L^2}{2r^2}  + \frac{2Jd_n^2\eta^2L^3}{r^2}\right) Q^t +  \frac{LJ\eta^2d_n^2 \sigma^2}{N r^2} \nonumber \\
&  \overset{(c)}{\leq} \mathbb E \left[ F(W^t)\right] - \frac{J\eta} 2 \mathbb E \left[ \left\| \nabla F(W^t) \right\|_F^2\right] +  \frac{LJ\eta^2d_n^2 \sigma^2}{N r^2} \nonumber \\
& \quad + \frac{2LJ\eta^2}{N} \sum_{n=1}^N \sum_{j=0}^{J-1} \mathbb E \left[ \left\| \nabla F_n(W^t) (P^t_{k(n)} )^\top P^t_{k(n)}    \right\|_F^2\right] \nonumber \\ 
& \quad + \left( \frac{\eta d_n^2L^2}{2r^2}  + \frac{2Jd_n^2\eta^2L^3}{r^2}\right) Q^t  \nonumber \\
& \overset{(d)}{=} \mathbb E \left[ F(W^t)\right]  + \left( \frac{\eta d^2L^2}{2Nr^2}  + \frac{2Jd^2\eta^2L^3}{Nr^2}\right)Q^t  + \frac{LJ\eta^2d^2 \sigma^2}{N r^2}  \nonumber \\ 
& \quad + \frac{2LJ^2\eta^2d}{Nr} \sum_{n=1}^N \mathbb E \left[ \left\| \nabla F_n(W^t)    \right\|_F^2\right] - \frac{J\eta} 2 \mathbb E \left[ \left\| \nabla F(W^t) \right\|_F^2\right]  \nonumber \\ 
& \leq \mathbb E \left[ F(W^t)\right] + \left( -\frac{J\eta} 2 + \frac{2Ld_n J^2 \eta^2 }{r}\right) \mathbb E \left[ \left\| \nabla F(W^t) \right\|_F^2\right]  \nonumber \\
& \quad +  \frac{\eta d_n^2L^2 + 4Jd_n^2\eta^2L^3}{2Nr^2} Q^t + \frac{2Ld_nJ^2\eta^2\varsigma^2}{r}  + \frac{LJ\eta^2d^2_n \sigma^2}{N r^2},
\end{align}
where (a) is due to Assumption \ref{assump:stoch_grad}, $\|a+b\|^2\leq 2(\|a\|^2 + \|b\|^2)$ and Assumption \ref{skect-assum}, which has  $\big\|P^t_{k(n)}\big\|_2 \leq \sqrt{\frac{d_n}{r}}$; (b) and (c) are based on \eqref{eq-Lsmooth} and Jensen’s inequality, respectively; and (d) is because $P^t_{k(n)}$ is independent of $W^t$ and thus 
$
\mathbb E \left[\| \nabla F_n(W^t) (P^t_{k(n)})^\top P^t_{k(n)}\|_F^2\right] = \mathbb E \left[  \textrm{Tr} \left( (P^t_{k(n)})^\top P^t_{k(n)} (\nabla F_n(W^t))^\top \nabla F_n(W^t) (P^t_{k(n)})^\top P^t_{k(n)} \right)\right]    
$ = $\frac{d_n}{r}\left\| \nabla F_n(W^t) \right\|_F^2$ by Assumption \ref{skect-assum}. 
The last inequality and \eqref{des-lemma} is due to the learning rate condition $\eta \leq \frac{r}{16Ld_nJ}$. 
\end{IEEEproof}
Our next lemma bounds  $Q^t$ in \eqref{des-lemma}.
\begin{lemma}
In the same setting of Lemma \ref{lemma-1}, FedKRSO has
\begin{align}\label{drift}
& Q^t \leq   \frac{8J^3\eta^2d_n}{r} \varsigma^2  + \frac{8J^3\eta^2d_n}{r} \mathbb E \left[ \left\|\nabla F(W^t)\right\|_F^2\right]  \nonumber \\
& \qquad + \frac{4J^2d^2_n \eta^2 \sigma^2}{r^2} 
\end{align}
\end{lemma}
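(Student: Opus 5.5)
We bound the client drift $Q^t$ by unrolling the local recursion. Within round $t$, client $n$ uses the fixed matrix $P:=P^t_{k(n)}$, and the accumulator satisfies $B^{t,j}_{k(n),n}=-\eta\sum_{l=0}^{j-1}\hat\nabla f_n(W^{t,l}_n)P^\top$. Hence
\begin{align*}
B^{t,j}_{k(n),n}P=-\eta\sum_{l=0}^{j-1}\hat\nabla f_n(W^{t,l}_n)P^\top P .
\end{align*}
We split each stochastic gradient into three parts:
\begin{itemize}
\item the noise $\hat\nabla f_n(W^{t,l}_n)-\nabla F_n(W^{t,l}_n)$,
\item the drift term $\nabla F_n(W^{t,l}_n)-\nabla F_n(W^t)$,
\item the anchor term $\nabla F_n(W^t)$.
\end{itemize}

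\textbf{Noise part.} The noise terms are martingale differences conditional on $P$ and the past. The cross terms therefore vanish, and $\|P^\top P\|_2\le d_n/r$ from Assumption~\ref{skect-assum} gives a contribution of at most $\eta^2 j\, d_n^2\sigma^2/r^2$. Summed over $j<J$, this is of order $J^2\eta^2 d_n^2\sigma^2/r^2$, which matches the $\sigma^2$ term up to a constant factor 4 from the splitting.

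\textbf{Anchor part.} We use Jensen to get a factor $j$, then
\[
\mathbb E\|\nabla F_n(W^t)P^\top P\|_F^2=\frac{d_n}{r}\|\nabla F_n(W^t)\|_F^2,
\]
exactly as in step (d) of Lemma~\ref{lemma-1}. This uses the independence of $P^t_{k(n)}$ from $W^t$. This part contributes roughly $\eta^2 j^2 (d_n/r)\|\nabla F_n(W^t)\|_F^2$. Summing over $j$ gives $J^3$, and averaging over $n$ with Assumption~\ref{assump:heter} gives the $\varsigma^2$ and $\|\nabla F(W^t)\|_F^2$ terms with coefficient $\mathcal O(J^3\eta^2 d_n/r)$.

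\textbf{Drift part.} By Assumption~\ref{assump:smoothness} and the operator-norm bound,
\[
\|(\nabla F_n(W^{t,l}_n)-\nabla F_n(W^t))P^\top P\|_F\le \frac{d_n}{r}L\,\|B^{t,l}_{k(n),n}P\|_F .
\]
Jensen then gives a contribution of at most $c\,\eta^2 j\, (d_n/r)^2 L^2\sum_{l<j}\mathbb E\|B^{t,l}_{k(n),n}P\|_F^2$. Summing over $j<J$ and averaging over $n$ bounds this by $c\,\eta^2J^2 d_n^2L^2 r^{-2}\,Q^t$. The step size condition $\eta\le r/(16Ld_nJ)$ makes the coefficient at most a small constant such as $1/2$, so we can move it to the left-hand side and absorb it into $Q^t$. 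This doubles the remaining constants and should produce the stated $8$ and $4$.

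\textbf{Main obstacle.} The delicate step is the anchor part. Because $P$ is fixed over the $J$ inner steps, we cannot simply use $\mathbb E[P^\top P]=I$ inside the square. We must instead rely on the second-moment identity, which costs a factor $d_n/r$ but not $(d_n/r)^2$. That identity holds only because $\nabla F_n(W^t)$ is independent of $P^t_{k(n)}$. If we instead bounded this term by the crude operator norm, we would get $d_n^2/r^2$ and the stated constants would fail. Everything else is routine constant bookkeeping with $\|a+b+c\|^2\le 3(\cdot)$, or with a two-way split after separating the martingale noise.
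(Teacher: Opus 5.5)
Your proof is correct, but it is organized differently from the paper's.

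\textbf{The paper's route.} The paper never unrolls the accumulator as a sum. It proves a one-step recursion for $a_j:=\mathbb E\|B^{t,j}_{k(n),n}P^t_{k(n)}\|_F^2$:
\begin{itemize}
\item the fresh noise of step $j$ is split off exactly, since it is conditionally mean-zero given $B^{t,j}_{k(n),n}$ and $P^t_{k(n)}$;
\item Young's inequality then gives $(1+\frac1J)a_j+4J\eta^2(\cdots)$;
\item smoothness turns the drift into $\frac{4J\eta^2 d_n^2L^2}{r^2}a_j$, which the step size folds into $(1+\frac2J)a_j$;
\item unrolling with $\sum_{p<J}(1+\frac2J)^p\le 4J$ bounds each $a_j$, and these bounds are summed over $j$.
\end{itemize}

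\textbf{Your route.} You expand $B^{t,j}_{k(n),n}P^t_{k(n)}$ as a sum and use the three-way split, Jensen, and martingale orthogonality of the noise. This gives the self-referential inequality $Q^t\le(\cdots)+\frac{3\eta^2J^2d_n^2L^2}{r^2}Q^t$, whose coefficient is at most $3/256$, so you absorb it.

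\textbf{Shared ingredients.} Both arguments use $\|(P^t_{k(n)})^\top P^t_{k(n)}\|_2=d_n/r$ for the noise and drift terms. Both use the exact identity $\mathbb E\|X(P^t_{k(n)})^\top P^t_{k(n)}\|_F^2=\frac{d_n}{r}\|X\|_F^2$ for the anchor term, which holds because $P^t_{k(n)}$ is independent of $W^t$. You correctly identified this identity as the crux.

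\textbf{What each route buys.} The paper's recursion gives a per-step bound on $a_j$ and exact noise separation at every step. It also never needs $Q^t<\infty$ a priori. Your absorption step uses that tacitly; a crude forward induction supplies it when $\mathbb E\|\nabla F_n(W^t)\|_F^2<\infty$, and otherwise the bound is vacuous.

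Your route gives sharper constants. Carrying out your bookkeeping yields about $\frac32\eta^2J^2d_n^2\sigma^2/r^2$ and $\eta^2J^3\frac{d_n}{r}(\varsigma^2+\mathbb E\|\nabla F(W^t)\|_F^2)$, inflated only by $256/253$ after absorption. That is well inside the stated $4$ and $8$. By contrast, the paper's recursion with its own constants gives $4J\cdot 4J\eta^2\frac{d_n}{r}=16J^2\eta^2\frac{d_n}{r}$ per $a_j$, i.e.\ $16J^3$ rather than $8J^3$ after summing. Your argument therefore certifies the stated constant more cleanly.

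\textbf{One caution.} Drop your parenthetical alternative of separating the martingale noise exactly and then doing a two-way split in the unrolled form. Noise at step $l$ moves $W^{t,m}_n$ for $m>l$. Hence
\[
\mathbb E\langle e_l P^\top P,\,(\nabla F_n(W^{t,m}_n)-\nabla F_n(W^t))P^\top P\rangle\neq 0
\]
in general, writing $P=P^t_{k(n)}$ and $e_l$ for the step-$l$ noise. Exact separation works only one step at a time, as in the paper's recursion. Your three-way inequality split is the right choice.
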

\begin{IEEEproof}
Based on Line~\ref{eq:update-grad-acc} in Algorithm \ref{alg:local training}:
\begin{align}
B^{t,j+1}_{k(n),n}  =  B^{t,j}_{k(n),n} -\eta \hat \nabla f_n\left( W^{t,j}_n\right)( P^{t,k(n)})^\top.    
\end{align}
By Assumption \ref{assump:stoch_grad}, there is
\begin{align}
& \mathbb E \left[   \left\| B^{t,j+1 }_{k(n),n} P^t_{k(n)}\right\|_F^2  \right ]  =  \frac{d^2 \eta^2 \sigma^2}{r^2}  \nonumber \\
& \quad + \mathbb E \left[   \left\| \left( B^{t,j}_{k(n),n} -\eta \nabla F_n( W^{t,j}_n)( P^t_{k(n)} )^\top \right) P^t_{k(n)}\right\|_F^2  \right ]  \nonumber \\
& \overset{(e)}{\leq} \left( 1 + \frac{1}J \right) E \left[   \left\| B^{t,j}_{k(n),n} P^t_{k(n)}\right\|_F^2  \right ]  +  \frac{d^2_n \eta^2 \sigma^2}{r^2} + 4J \eta^2 \nonumber \\ 
& \quad \cdot \mathbb E \left[  \left\|  \left( \nabla F_n( W^{t,j}_n)- \nabla F_n(W^t) \right) ( P^t_{k(n)})^\top P^t_{k(n)}\right\|_F^2 \right] \nonumber \\
& \quad + 4J\eta^2 \mathbb E \left[ \left\| \nabla F_n(W^t)  ( P^t_{k(n)})^\top P^t_{k(n)} \right\|_F^2 \right] \nonumber \\
& \leq \left( 1+ \frac{1} J  + \frac{4J\eta^2 d_n^2L^2}{r^2}\right) \mathbb E \left[   \left\| B^{t,j }_{k(n),n} P^t_{k(n)}\right\|_F^2  \right ] \nonumber \\
& \quad + \frac{4J\eta^2 d_n}{r} \mathbb E \left[  \|  \nabla F_n(W^t)\|_F^2\right] + \frac{d^2_n \eta^2 \sigma^2}{r^2} \nonumber \\
& \leq \left( 1 + \frac{2} J \right)  \mathbb E \left[   \left\| B^{t,j }_{k(n),n} P^t_{k(n)}\right\|_F^2  \right ] + \frac{d^2_n \eta^2 \sigma^2}{r^2} \nonumber \\
& \quad  + \frac{4J\eta^2 d_n}{r} \mathbb E \left[  \|  \nabla F_n(W^t)\|_F^2\right],
\end{align}
where (e) uses the Young's inequality and the last inequality is due to $\eta \leq \frac{r}{16Ld_nJ}$. 

Unrolling above inequality for $j =0, \cdots, J-1$ yields
\begin{align}
\text{\small$\mathbb E \left[   \left\| B^{t,j}_{k(n),n} P^t_{k(n)}\right\|_F^2  \right ] \leq \frac{8J^2\eta^2 d_n}{r}  \mathbb E \left[  \|  \nabla F_n(W^t)\|_F^2\right] + \frac{4Jd^2_n \eta^2 \sigma^2}{r^2} $},
\end{align}
where we use fact that $\sum_{p=0}^{J-1}(1+\frac{2}{J})^p \leq 4J$. Summing up over all clients and local updates gives
\begin{align}\label{drift}
Q^t &\leq \frac{8J^3\eta^2d_n}{r} \mathbb E \left[  \frac{1}{N} \sum_{n=1}^N \left\| \nabla F_n(W^t) \right\|_F^2\right]  + \frac{4J^2d^2_n \eta^2 \sigma^2}{r^2}. 
\end{align}
Applying Assumption \ref{assump:heter} to the above inequality concludes the proof. 
\end{IEEEproof}
Combining the above two lemmas, we have the following convergence result. Denote the minimum of the objective in Problem \eqref{eq: original_fl} as $F^\star$ and the initial optimality gap as $\Delta = F(W^0) - F^\star$.
\begin{theorem}\label{thm-1}
Let Assumption \ref{assump:smoothness}-\ref{skect-assum} hold. Consider the FedKRSO given by Algorithm \ref{alg:fedkrso} under SGD setting without momentum acceleration, and with  interval number $I = 1$. Then, with the learning rate chosen such that $\eta \leq \mathcal O (\frac{r^2}{JLd^2_n})$,  the iterates generated by the algorithm satisfy
\begin{align}\label{final-conver}
& \frac{1}{T+1} \sum_{t=0}^T \mathbb E \left[ \left\| \nabla F(W^t)\right\|_F^2\right] \leq \frac{64d^2_nL\Delta}{r^2 T}   + 14 \sqrt{\frac{Ld_n\Delta}{rT}}  \varsigma \nonumber \\
& \quad \quad + 24 \left( \frac{d^2_nL\Delta\sigma }{r^2 T J^{\frac 1 2}}\right)^{\frac 2 3}  + 8 \sqrt{\frac{d^2_nL \Delta \sigma^2}{r^2NJ T}}.
\end{align}
\end{theorem}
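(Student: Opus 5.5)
The plan is to substitute the drift bound \eqref{drift} of Lemma~2 into the one-round descent inequality \eqref{des-lemma} of Lemma~\ref{lemma-1}. This gives a recursion containing only $\mathbb E[F(W^t)]$, $\mathbb E\|\nabla F(W^t)\|_F^2$, and constants. We then telescope over $t=0,\dots,T$ and tune $\eta$.

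\emph{Substitution.} The coefficient of $Q^t$ in \eqref{des-lemma} is $\eta d_n^2L^2/r^2$. Multiplying this into \eqref{drift} produces three pieces.
\begin{itemize}
\item A gradient term $\frac{8J^3\eta^3 d_n^3L^2}{r^3}\mathbb E\|\nabla F(W^t)\|_F^2$. Under $\eta\le c\,\frac{r^2}{JLd_n^2}$ with a small absolute constant $c$ (the $\mathcal O(\cdot)$ in the statement), it is at most $\frac{J\eta}{8}\mathbb E\|\nabla F(W^t)\|_F^2$. It is then absorbed into $-\frac{3J\eta}{8}\mathbb E\|\nabla F(W^t)\|_F^2$, leaving a net descent of $-\frac{J\eta}{4}\mathbb E\|\nabla F(W^t)\|_F^2$.
\item A heterogeneity term $\frac{8J^3\eta^3 d_n^3L^2}{r^3}\varsigma^2$. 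The same step-size condition bounds it by a constant times $\frac{Ld_nJ^2\eta^2}{r}\varsigma^2$, so it merges with the existing $\varsigma^2$ term.
\item A noise term $\frac{4J^2 d_n^4L^2\eta^3\sigma^2}{r^4}$. This one is genuinely third order and must be kept.
\end{itemize}
Any mismatch between powers of $d_n/r$ can be absorbed by bounding $d_n/r\ge1$ appropriately. One should also check that the condition $\eta\le\frac{r}{16Ld_nJ}$ needed by the lemmas is implied, since $r^2/d_n^2\le r/d_n$.

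\emph{Telescoping.} Summing over $t$, dividing by $(T+1)J\eta/4$, and using $F(W^{T+1})\ge F^\star$ gives
\[
\frac{1}{T+1}\sum_{t}\mathbb E\|\nabla F(W^t)\|_F^2\le \frac{4\Delta}{J\eta T}+a_1\frac{Ld_nJ\eta}{r}\varsigma^2+a_2\frac{J d_n^4L^2\eta^2\sigma^2}{r^4}+a_3\frac{L\eta d_n^2\sigma^2}{Nr^2}.
\]
Here $a_1,a_2,a_3$ are absolute constants. This has the standard form $\frac{A}{\eta T}+B\eta+C\eta^2$.

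\emph{Step-size choice.} Following the usual lemma for such bounds (e.g.\ Koloskova et al.), take $\eta$ as the minimum of four quantities:
\begin{itemize}
\item the cap $c\,r^2/(JLd_n^2)$;
\item $\sqrt{\Delta r/(L d_n J^2\varsigma^2 T)}$;
\item $\big(\Delta r^4/(J^2 d_n^4L^2\sigma^2T)\big)^{1/3}$;
\item $\sqrt{\Delta N r^2/(J^2 L d_n^2\sigma^2 T)}$.
\end{itemize}
Each choice balances $\frac{\Delta}{J\eta T}$ against one error term. The cap yields the $\frac{d_n^2L\Delta}{r^2T}$ term. The $\varsigma$-balance yields $\sqrt{Ld_n\Delta/(rT)}\,\varsigma$. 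The cubic balance yields $\big(d_n^2L\Delta\sigma/(r^2TJ^{1/2})\big)^{2/3}$. The linear noise balance yields $\sqrt{d_n^2L\Delta\sigma^2/(r^2NJT)}$. These match the four terms of \eqref{final-conver}.

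The main obstacle is bookkeeping rather than ideas. The absorption step must be consistent with a single explicit constant in the step-size cap, and the exponents of $J$ and $d_n/r$ from the $\eta^3$ noise term must reduce to the stated $J^{-1/2}$ and $(d_n^2/r^2)^{2/3}$ scaling. If they do not match exactly, I would loosen using $J\ge1$ and $d_n\ge r$ and track the explicit constants 64, 14, 24, 8 only at the end.
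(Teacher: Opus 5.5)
Your proposal is correct and follows essentially the same route as the paper: substitute the drift bound of Lemma~2 into the descent inequality of Lemma~1, use the step-size cap $\eta\le r^2/(16JLd_n^2)$ (which produces the constant $64$) to absorb the gradient term and merge the heterogeneity term, telescope to the form $\frac{4\Delta}{J\eta T}+b\eta+e\eta^2$, and tune $\eta$ as in \cite[Lemma~15]{koloskova2020unified}. One small slip: your fourth step-size candidate should be $\sqrt{\Delta N r^2/(J L d_n^2\sigma^2 T)}$, with $J$ rather than $J^2$, because that is what balancing $\frac{\Delta}{J\eta T}$ against $\frac{L d_n^2\sigma^2\eta}{N r^2}$ gives, and with $J^2$ the last term loses its $J^{-1/2}$ factor.
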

\begin{IEEEproof}
Substituting \eqref{drift} into \eqref{des-lemma} gives
\begin{align}
& \mathbb E \left[ F(W^{t+1})\right] \leq \mathbb E \left[ F(W^t)\right]  + \frac{2Ld_nJ^2\eta^2}{r} \varsigma^2   + \frac{LJ\eta^2d^2_n \sigma^2}{N r^2}\nonumber \\
& \quad + \left( -\frac{J\eta} 2 + \frac{2Ld_n J^2 \eta^2 }{r}\right) \mathbb E \left[ \left\| \nabla F(W^t) \right\|_F^2\right] \nonumber \\
& \quad + \left( \frac{\eta d^2_nL^2}{2r^2}  + \frac{2Jd^2_n\eta^2L^3}{r^2}\right) \left(  \frac{8J^3\eta^2d_n}{r} \mathbb E \left[ \left\|\nabla F(W^t)\right\|_F^2\right] \right. \nonumber \\
& \quad \quad \left.+ \frac{8J^3\eta^2d_n}{r} \varsigma^2 + \frac{4J^2d^2_n \eta^2 \sigma^2}{r^2} \right) \nonumber \\
& \leq \mathbb E \left[ F(W^t)\right] - \frac{J\eta} 4 \mathbb E \left[ \left\| \nabla F(W^t) \right\|_F^2\right] \nonumber \\
& \quad + \left(\frac{3Ld_nJ^2 \varsigma^2}{r} + \frac{LJd^2_n\sigma^2}{Nr^2}\right) \eta^2 + \frac{4J^2L^2d^4_n\sigma^2}{r^4} \eta^3,
\end{align}
where the last inequality is due to the condition on the learning rate. Taking telescoping cancellation over $t$ gives 
\begin{align}
& \frac{1}{T+1} \sum_{t=0}^T \mathbb E \left[ \left\| \nabla F(W^t)\right\|_F^2\right] \leq \frac{4(F(W^0) - F^\star)}{J\eta T}  \nonumber \\
& \quad \quad  + \left(\frac{12 Ld_nJ \varsigma^2}{r} + \frac{4Ld^2_n\sigma^2}{Nr^2}\right) \eta + \frac{16 JL^2d^4_n\sigma^2}{r^4} \eta^2. 
\end{align}
The theorem follows by applying~\cite[Lemma 15]{koloskova2020unified}.
\end{IEEEproof}

\section{Evaluation}

In this section, we evaluate FedKRSO  comprehensively from multiple perspectives, including model performance, memory usage, and communication cost. Specifically, we first compare the model performance of FedKRSO with several baselines under different levels of data heterogeneity. Next, we then analyze its efficiency in terms of GPU memory footprint and communication overhead. Finally, we conduct an ablation study to examine the impacts of interval length $J$ and total seed number $K$ on the model performance in FedKRSO.  

\subsection{Experimental Setup}
\noindent\textbf{Datasets and models.}  We evaluate the performance of FedKRSO on GLUE~\cite{wang2018glue}: a benchmark for evaluating the performance of NLP models on eight tasks, namely,  CoLA (grammatical acceptability), SST-2 (sentiment polarity), MRPC (paraphrase detection), QQP (duplicate-question detection), STS-B (semantic-similarity regression), MNLI (three-way natural-language inference), QNLI (question-answer entailment), and RTE (binary natural-language inference). We conducted the FL experiments with $N = 10$ clients. For each task, we consider both IID and non-IID settings, where the non-IID data distributions across clients are generated using a Dirichlet distribution following \cite{hsu2019measuring}.
%
%
%
%
%
%
%
We use two pre-trained language models as base models: RoBERTa-base (125M) and RoBERTa-large  (350M)~\cite{liu2019roberta}.


\noindent\textbf{Baselines.} We consider the following three baselines for performance comparison: 
\begin{itemize}
\item \textbf{FedFFT} \cite{mcmahan2017communication}: FedAvg with FFT for local training, which serves as the upper bound for model performance but is not feasible in resource-constrained edge environments.    

\item \textbf{FedIT} \cite{zhang2024towards}: FedAvg with LoRA for local training, which is a common baseline in federated fine-tuning. 

\item \textbf{FFA-LoRA} \cite{sun2024improving}: FedIT with partially frozen LoRA modules, which serves as the state-of-the-art baseline in federated LoRA fine-tuning and offers improved memory and communication efficiency.

\end{itemize}

\noindent\textbf{System settings.} In all methods, clients perform a total of 100 local iterations in each FL round using the AdamW optimizer with a cosine decay scheduler. For all LoRA-based methods, we set the rank to 4. For our method, we set $K = 10$ and search for the optimal interval length $J$ within the set of $\{10, 20, 50, 100\}$. %

The default value for batch size in all experiments is 16, except for CoLA, which uses 32. The total number of FL rounds was 30. Each experiment is run with three random seeds, and we report the average results. All experiments are conducted on a Linux server equipped with four NVIDIA Quadro RTX 8000 GPUs using PyTorch.  

\subsection{Experimental Results}

\begin{table*}[htbp]
\centering
\caption{Performance comparison on GLUE under IID and non-IID (Dirichlet distribution with concentration parameter $\alpha$) settings using pre-trained \textbf{(a)} RoBERTa-base and \textbf{(b)} RoBERTa-large. Best and second-best results are in \textbf{bold} and \underline{underline}, respectively.}
\label{tab:glue_combined}
\scalebox{0.93}{
\subfloat[RoBERTa-base]{
\label{tab:glue_base}
\centering
\begin{tabular}{llcccc}
\toprule
\textbf{Setting} & \textbf{Task} & \textbf{FedIT} & \textbf{FFA-LoRA} & \textbf{FedFFT} & \textbf{FedKRSO} \\
\midrule
\multirow{8}{*}{IID} 
& CoLA    & 59.98 & 59.26 & \underline{60.08} & \textbf{60.28} \\
& MRPC    & 88.15 & 86.36 & \textbf{89.95}    & \underline{88.97} \\
& SST-2   & \underline{94.30} & 93.04 & \textbf{94.53} & 94.04 \\
& STS-B   & 90.66 & 90.01 & \textbf{91.11}    & \underline{90.89} \\
& QNLI    & \underline{90.05} & 86.97 & \textbf{91.45} & 89.55 \\
& QQP     & 85.16 & 82.96 & \textbf{87.22}    & \underline{85.62} \\
& MNLI-mm & 83.43 & 79.79 & \textbf{84.88}    & \underline{83.86} \\
& Average & 84.53 & 82.63 & \textbf{85.60}    & \underline{84.74} \\
\midrule
\multirow{8}{*}{$\alpha=0.5$} 
& CoLA    & 56.05 & 55.99 & \textbf{59.76}    & \underline{59.32} \\
& MRPC    & 84.64 & 80.23 & \textbf{87.91}    & \underline{86.44} \\
& SST-2   & \underline{93.12} & 92.24 & \textbf{93.27} & 92.58 \\
& STS-B   & \textbf{88.94} & 88.44 & \underline{88.83} & \underline{88.83} \\
& QNLI    & \underline{89.12} & 86.11 & \textbf{90.38} & 87.80 \\
& QQP     & 84.47 & 82.30 & \textbf{86.43}    & \underline{85.06} \\
& MNLI-mm & 81.03 & 76.81 & \textbf{83.24}    & \underline{81.87} \\
& Average & 82.48 & 80.30 & \textbf{84.26}    & \underline{83.13} \\
\midrule
\multirow{8}{*}{$\alpha=0.25$} 
& CoLA    & 56.83 & 56.34 & \textbf{59.86}    & \underline{59.45} \\
& MRPC    & 77.45 & 72.63 & \textbf{85.54}    & \underline{82.68} \\
& SST-2   & \underline{91.90} & 89.68 & \textbf{92.24} & 91.40 \\
& STS-B   & 88.70 & 87.59 & \textbf{89.10}    & \underline{89.02} \\
& QNLI    & \underline{87.03} & 82.71 & \textbf{88.36} & 85.75 \\
& QQP     & 83.09 & 80.79 & \textbf{84.89}    & \underline{83.51} \\
& MNLI-mm & 77.67 & 70.60 & \textbf{81.21}    & \underline{79.03} \\
& Average & 80.38 & 77.19 & \textbf{83.03}    & \underline{81.55} \\
\bottomrule
\end{tabular}
}
}
\scalebox{0.93}{
\subfloat[RoBERTa-large]{
\label{tab:glue_large}
\centering
\begin{tabular}{llcccc}
\toprule
\textbf{Setting} & \textbf{Task} & \textbf{FedIT} & \textbf{FFA-LoRA} & \textbf{FedFFT} & \textbf{FedKRSO} \\
\midrule
\multirow{8}{*}{IID} 
& CoLA    & 66.44 & 62.90 & \textbf{66.93}    & \underline{66.60} \\
& MRPC    & \underline{90.93} & 88.24 & \textbf{91.18} & 90.69 \\
& SST-2   & 95.64 & 95.15 & \textbf{96.41}    & \underline{95.91} \\
& STS-B   & \textbf{92.32} & 91.38 & 92.26    & \underline{92.27} \\
& QNLI    & 93.26 & 91.36 & \textbf{94.28}    & \underline{93.96} \\
& QQP     & 87.08 & 85.37 & \textbf{88.28}    & \underline{87.33} \\
& MNLI-mm & 88.51 & 86.73 & \textbf{89.25}    & \underline{88.71} \\
& Average & 87.74 & 85.87 & \textbf{88.37}    & \underline{87.92} \\
\midrule
\multirow{8}{*}{$\alpha=0.5$} 
& CoLA    & \underline{66.77} & 59.23 & 66.48         & \textbf{66.90} \\
& MRPC    & 85.78 & 79.49 & \textbf{89.05}    & \underline{88.48} \\
& SST-2   & 95.07 & 94.57 & \textbf{95.72}    & \underline{95.15} \\
& STS-B   & \textbf{91.27} & 89.93 & 90.95         & \underline{91.02} \\
& QNLI    & \underline{93.28} & 91.20 & \textbf{93.86} & 92.59 \\
& QQP     & 86.34 & 84.25 & \textbf{87.65}    & \underline{86.84} \\
& MNLI-mm & \underline{87.14} & 84.84 & \textbf{87.99} & 86.63 \\
& Average & 86.52 & 83.36 & \textbf{87.39}    & \underline{86.80} \\
\midrule
\multirow{8}{*}{$\alpha=0.25$} 
& CoLA    & 64.94 & 56.60 & \textbf{68.03}    & \underline{67.10} \\
& MRPC    & 77.29 & 73.69 & \textbf{86.60}    & \underline{85.62} \\
& SST-2   & \underline{94.95} & 94.15 & \textbf{95.03} & \textbf{95.03} \\
& STS-B   & \underline{90.80} & 88.38 & 90.27         & \textbf{90.88} \\
& QNLI    & 90.61 & 86.29 & \textbf{92.52}    & \underline{90.80} \\
& QQP     & 84.93 & 81.57 & \textbf{86.67}    & \underline{85.61} \\
& MNLI-mm & 85.03 & 81.20 & \textbf{86.58}    & \underline{85.35} \\
& Average & 84.08 & 80.27 & \textbf{86.53}    & \underline{85.77} \\
\bottomrule
\end{tabular}
}
}
\end{table*}

\subsubsection{Fine-Tuning Model Performance}

GLUE uses task-specific evaluation metrics: CoLA is evaluated with the Matthews correlation coefficient (MCC), STS-B with Pearson correlation (Corr), and all other tasks with accuracy (Acc). Table~\ref{tab:glue_combined} reports the performance of all methods under these metrics on both IID and non-IID (Dirichlet distribution with $\alpha = 0.5$ and $\alpha = 0.25$) settings, using RoBERTa-base and RoBERTa-large, respectively. %

From the results, we have the following observations. 
FedFFT consistently achieves the highest overall performance, with a noticeable margin over LoRA-based baselines. This gap further increases under non-IID settings, increasing from 1.1 (IID) to 2.65 ($\alpha = 0.5$) on RoBERTa-base, highlighting the needs for FFT, consistent with observations in \cite{babakniya2023slora}.

Second, our proposed method, {FedKRSO}, outperforms both {FedIT} and {FFA-LoRA} across all setting and better approaches FedFFT. Also, under non-IID scenarios, all methods exhibit performance degradation, which is consistent with prior findings~\cite{babakniya2023slora, guo2025selective, sun2024improving}. {However, compared to FedIT, FedKRSO experiences a smaller performance drop, achieving better performance over FedIT by 0.65 and 1.17 on RoBERTa-base under $\alpha=0.5$ and $\alpha=0.25$, respectively.} As data heterogeneity increases, the larger performance gap between FedKRSO and FedIT underscores the effectiveness of FedKRSO in highly heterogeneous data settings.


Third, RoBERTa-large experiences less performance degradation than RoBERTa-base when fine-tuned with LoRA under data heterogeneity. For instance, under IID settings, {the performance gap between FedIT and FedFFT is only 0.63 on RoBERTa-large, compared to 1.07 on RoBERTa-base.} This trend aligns with the findings in~\cite{lialin2023scaling}, suggesting that larger models are more robust to PEFT methods due to greater parameter redundancy.

\subsubsection{Memory and Communication Efficiency}

\begin{table}[htbp]
\centering
\caption{Per-round communication overhead and GPU memory usage of different approaches, measured in GB.}
\vspace*{-5pt}
\begin{tabular}{lcccc}
\toprule
\multirow{2}{*}{Approach} & \multicolumn{2}{c}{RoBERTa-base} & \multicolumn{2}{c}{RoBERTa-large} \\
\cmidrule(lr){2-3} \cmidrule(lr){4-5}
 & Comm. & Mem. & Comm. & Mem. \\
\midrule
FedIT   & 0.00958 & 2.87 & 0.0215 & 5.62 \\
FFA-LoRA     & 0.00688 & 2.70 & 0.0144 & 5.05 \\
FedKRSO & 0.02913 & 2.65 & 0.0738 & 5.00 \\
FedFFT   & 0.938   & 4.31 & 2.67   & 9.43 \\
\bottomrule
\end{tabular}
\label{tab:comparison_comm_mem}
\vspace*{-5pt}
\end{table}

Table~\ref{tab:comparison_comm_mem} summarizes the GPU memory usage and communication overhead of different approaches. The reported communication overhead includes both downlink and uplink costs, while GPU memory usage refers to the peak GPU memory footprint at each client during training, as measured by \texttt{nvidia-smi}.

From the table, we can observe that FedKRSO maintains the least peak GPU memory footprint, requiring 2.65 GB and 5.00 GB for RoBERTa-base and RoBERTa-large models, respectively. This memory efficiency primarily stemmed from FedKRSO's subspace optimization strategy as analyzed in Section~\ref{sec:subspace_training}. Also, FedKRSO demonstrated significantly lower communication costs compared to FedFFT for RoBERTa-base and RoBERTa-large. Notably, while FedKRSO incurred slightly higher communication overhead than FedIT, the gap between them was relatively small. In contrast, FedFFT required over $300\times$ higher communication cost than FedKRSO for RoBERTa-base. A similar trend holds for RoBERTa-large. These results suggest that FedKRSO achieves a strong balance between performance and communication efficiency, offering a substantial improvement over FedFFT while maintaining communication costs in the same order of magnitude as LoRA-based baselines. 

\subsubsection{Ablation Study}
In this part, we conduct an ablation study to investigate the impacts of two key hyperparameters on the model performance in FedKRSO: {interval length $J$} and {seed number $K$}. These studies aim to provide a deeper understanding of how each hyperparameter influences the effectiveness and robustness of FedKRSO. 

\begin{figure}[htbp]
    \centering
    \subfloat[Average scores across GLUE tasks with different interval lengths.]{
        \includegraphics[width=0.45\linewidth]{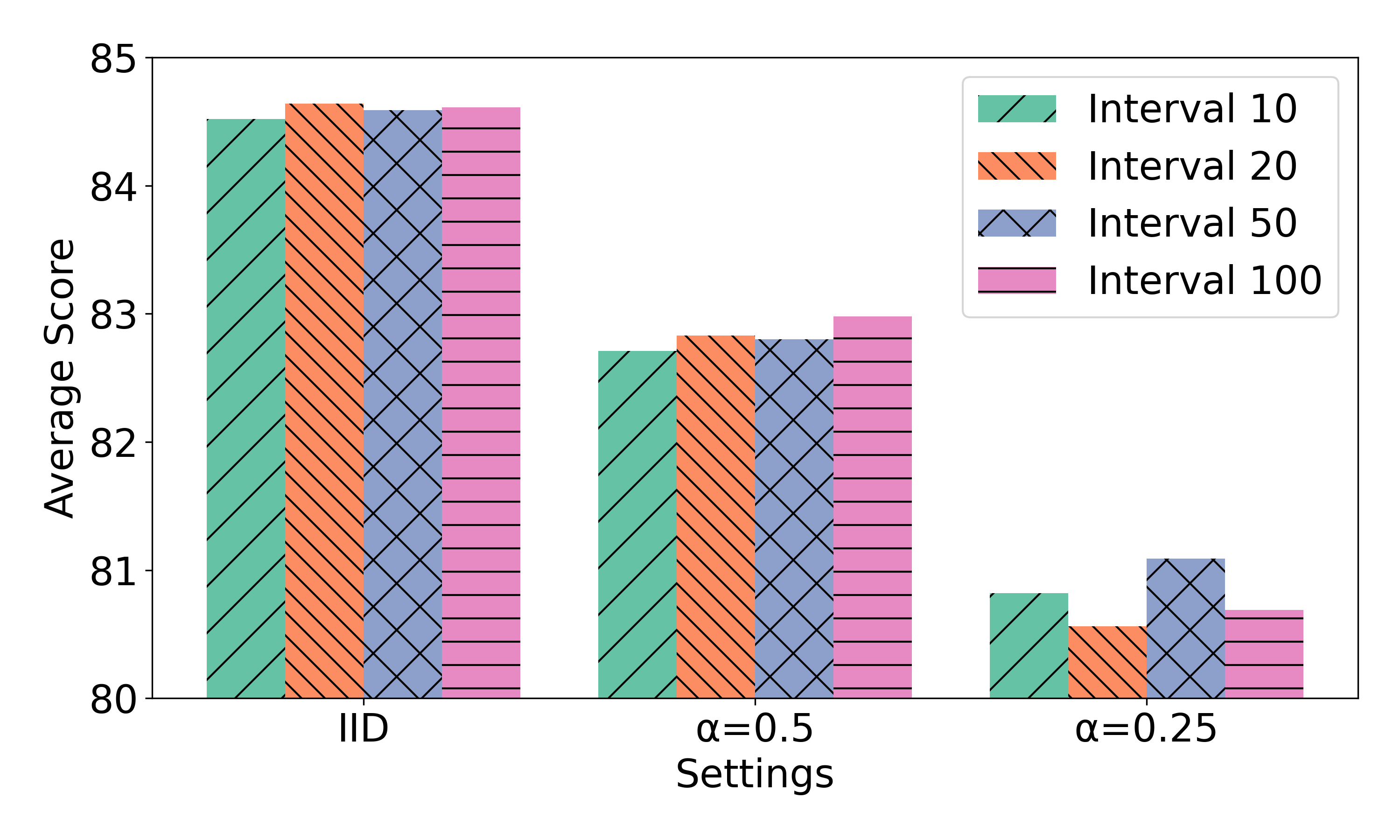}
        \label{fig:interval_ablation}
    }
    \hfill
    \subfloat[Accuracy on the MRPC task with different $K$.]{
        \includegraphics[width=0.45\linewidth]{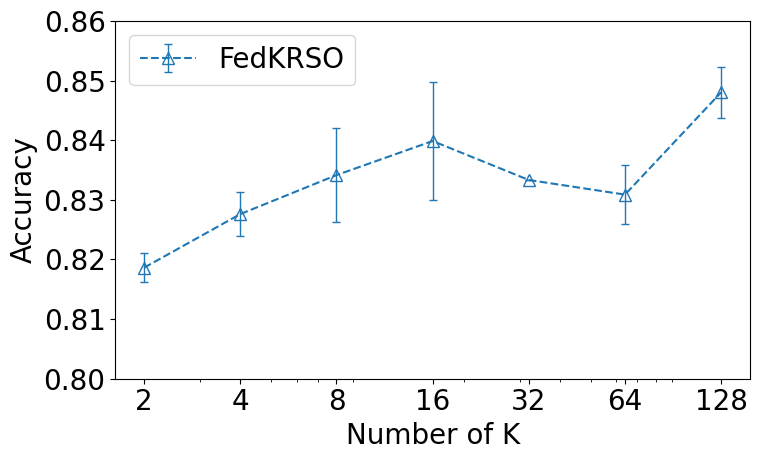}
        \label{fig:kseed_ablation}
    }

    \caption{Ablation study of FedKRSO on RoBERTa-base.}
    \label{fig:ablation}
\end{figure}

\noindent\textbf{Effect of interval length.} To investigate the effect of the local training interval on the performance of FedKRSO, we conducted an ablation study by varying the {interval length $J \in \{10, 20, 50, 100\}$ } under different data heterogeneity settings. The results are shown in Fig~\ref{fig:interval_ablation}. %

We observe that FedKRSO is robust to the choice of the {interval length} across all settings, with only minor variations in the average model performance. The interval length introduces a trade-off between subspace exploration and momentum preservation. A smaller interval enabled more frequent subspace changes, potentially enhancing parameter space exploration and improving performance in certain tasks, particularly under non-IID settings. For example, when $\alpha=0.25$, an interval of 10 achieved second best performance. Conversely, larger {interval length} (e.g., 50 or 100) reduced the frequency of subspace switching, which helps retain accumulated momentum and leads to more stable optimization dynamics. For example, when $\alpha=0.5$, an interval of 100 achieved best performance.

\noindent\textbf{Effect of seed number.} To understand the relationship between the number of random projection matrices (i.e., seeds $K$) and the {accuracy} of FedKRSO, we examine its performance under varying $K$, as shown in Figure~\ref{fig:kseed_ablation}. We observe that the {accuracy} improves as $K$ increases, suggesting that a larger number of subspaces allows the model to explore more diverse directions. When $K$ is too small, the performance of FedKRSO degrades due to limited expressiveness stemming from an insufficient number of subspaces. It is important to note that although settings such as $K=128$ yield better accuracy, they also incur higher communication costs, as more model updates need to be accumulated. Therefore, the choice of $K$ should strike a balance: it should be large enough to ensure model expressiveness, but not so large as to impose excessive communication overhead.

\section{Conclusion}
This paper proposes FedKRSO, a novel framework for federated fine-tuning of LLMs under resource-constrained edge environments. By using random subspace optimization with a finite set of random seeds, FedKRSO can enhance communication and memory efficiency while maintaining high model performance. Convergence properties of FedKRSO are provided, and experimental results demonstrate the advantages of FedKRSO compared to existing methods. In the future, we will extend FedKRSO to multimodal LLMs and evaluate its effectiveness on other benchmarks.  



\newpage
\bibliographystyle{IEEEtran}

\bibliography{cited.bib}

@article{zhao2023survey,
  title={A survey of large language models},
  author={Zhao, Wayne Xin and Zhou, Kun and Li, Junyi and Tang, Tianyi and Wang, Xiaolei and Hou, Yupeng and Min, Yingqian and Zhang, Beichen and Zhang, Junjie and Dong, Zican and others},
  journal={arXiv preprint arXiv:2303.18223},
  volume={1},
  number={2},
  year={2023}
}

@inproceedings{mcmahan2017communication,
  title={Communication-Efficient Learning of Deep Networks from Decentralized Data},
  author={McMahan, Brendan and Moore, Eider and Ramage, Daniel and Hampson, Seth and y Arcas, Blaise Aguera},
  booktitle={Artificial Intelligence and Statistics},
  vol={54},
  pages={1273--1282},
  year={2017}
}

@article{zhang2022scalable,
  title={Scalable and low-latency federated learning with cooperative mobile edge networking},
  author={Zhang, Zhenxiao and Gao, Zhidong and Guo, Yuanxiong and Gong, Yanmin},
  journal={IEEE Transactions on Mobile Computing},
  volume={23},
  number={1},
  pages={812--822},
  year={2022},
  publisher={IEEE}
}

@inproceedings{sun2024improving,
  title     = {Improving {L}o{RA} in Privacy-preserving Federated Learning},
  author    = {Youbang Sun and Zitao Li and Yaliang Li and Bolin Ding},
  booktitle = {International Conference on Learning Representations},
  year      = {2024},
}

@inproceedings{hu2022lora,
title={Lo{RA}: Low-Rank Adaptation of Large Language Models},
author={Edward J Hu and yelong shen and Phillip Wallis and Zeyuan Allen-Zhu and Yuanzhi Li and Shean Wang and Lu Wang and Weizhu Chen},
booktitle={International Conference on Learning Representations},
year={2022},
}

@inproceedings{
babakniya2023slora,
title={{SL}o{RA}: Federated Parameter Efficient Fine-Tuning of Language Models},
author={Sara Babakniya and Ahmed Elkordy and Yahya Ezzeldin and Qingfeng Liu and Kee-Bong Song and MOSTAFA EL-Khamy and Salman Avestimehr},
booktitle={International Workshop on Federated Learning in the Age of Foundation Models in Conjunction with NeurIPS 2023},
year={2023},
}

@inproceedings{chen2022revisiting,
  title={Revisiting Parameter-Efficient Tuning: Are We Really There Yet?},
  author={Chen, Guanzheng and Liu, Fangyu and Meng, Zaiqiao and Liang, Shangsong},
  booktitle={Proceedings of the 2022 Conference on Empirical Methods in Natural Language Processing},
  year={2022}
}

@inproceedings{pu2023empirical,
title={Empirical Analysis of the Strengths and Weaknesses of {PEFT} Techniques for {LLM}s},
author={George Pu and Anirudh Jain and Jihan Yin and Russell Kaplan},
booktitle={International Conference on Learning Representations 2023 Workshop on Mathematical and Empirical Understanding of Foundation Models},
year={2023},
}

@inproceedings{houlsby2019parameter,
  title={Parameter-efficient transfer learning for {NLP}},
  author={Houlsby, Neil and Giurgiu, Andrei and Jastrzebski, Stanislaw and Morrone, Bruna and De Laroussilhe, Quentin and Gesmundo, Andrea and Attariyan, Mona and Gelly, Sylvain},
  booktitle={International Conference on Machine Learning},
  year={2019},
}

@inproceedings{li2021prefix,
  title={Prefix-Tuning: Optimizing Continuous Prompts for Generation},
  author={Li, Xiang Lisa and Liang, Percy},
  booktitle={Proceedings of the 59th Annual Meeting of the Association for Computational Linguistics and the 11th International Joint Conference on Natural Language Processing (Volume 1: Long Papers)},
  pages={4582--4597},
  year={2021}
}

@inproceedings{liu2019signsgd,
  title={sign{SGD} via zeroth-order oracle},
  author={Liu, Sijia and Chen, Pin-Yu and Chen, Xiangyi and Hong, Mingyi},
  booktitle={International Conference on Learning Representations},
  year={2019}
}

@inproceedings{chen2019zo,
  title={{ZO}-{A}da{MM}: Zeroth-order adaptive momentum method for black-box optimization},
  author={Chen, Xiangyi and Liu, Sijia and Xu, Kaidi and Li, Xingguo and Lin, Xue and Hong, Mingyi and Cox, David},
  booktitle={33rd Conference on Neural Information Processing Systems},
  year={2019},
}

@inproceedings{zhang2024revisiting,
  title={Revisiting Zeroth-Order Optimization for Memory-Efficient {LLM} Fine-Tuning: A Benchmark},
  author={Zhang, Yihua and Li, Pingzhi and Hong, Junyuan and Li, Jiaxiang and Zhang, Yimeng and Zheng, Wenqing and Chen, Pin-Yu and Lee, Jason D and Yin, Wotao and Hong, Mingyi and others},
  booktitle={International Conference on Machine Learning},
  year={2024},
}

@inproceedings{hao2024flora,
  title={{FLORA}: Low-Rank Adapters Are Secretly Gradient Compressors},
  author={Hao, Yongchang and Cao, Yanshuai and Mou, Lili},
  booktitle={International Conference on Machine Learning},
  year={2024},
}

@inproceedings{zhao2024galore,
  title={{GaLore}: Memory-Efficient {LLM} Training by Gradient Low-Rank Projection},
  author={Zhao, Jiawei and Zhang, Zhenyu and Chen, Beidi and Wang, Zhangyang and Anandkumar, Anima and Tian, Yuandong},
  booktitle={International Conference on Machine Learning},
  year={2024},
}

@inproceedings{chen2025memory,
  author       = {Yiming Chen and
                  Yuan Zhang and
                  Yin Liu and
                  Kun Yuan and
                  Zaiwen Wen},
  title        = {A Memory Efficient Randomized Subspace Optimization Method for Training
                  Large Language Models},
  booktitle    = {International Conference on Machine Learning},
  year         = {2025},
}

@inproceedings{yeh2023navigating,
  title={Navigating text-to-image customization: From {LyCORIS} fine-tuning to model evaluation},
  author={Yeh, Shih-Ying and Hsieh, Yu-Guan and Gao, Zhidong and Yang, Bernard BW and Oh, Giyeong and Gong, Yanmin},
  booktitle={International Conference on Learning Representations},
  year={2023}
}

@inproceedings{liu2025optimization,
  title={On the Optimization Landscape of Low Rank Adaptation Methods for Large Language Models},
  author={Liu, Xu-Hui and Du, Yali and Wang, Jun and Yu, Yang},
  booktitle={International Conference on Learning Representations},
  year={2025}
}

@inproceedings{he2024subspace,
  author       = {Yutong He and
                  Pengrui Li and
                  Yipeng Hu and
                  Chuyan Chen and
                  Kun Yuan},
  title        = {Subspace Optimization for Large Language Models with Convergence Guarantees},
  booktitle    = {International Conference on Machine Learning},
  year         = {2025},
}

@inproceedings{zhang2024towards,
  title={Towards building the {F}ederated{GPT}: {F}ederated instruction tuning},
  author={Zhang, Jianyi and Vahidian, Saeed and Kuo, Martin and Li, Chunyuan and Zhang, Ruiyi and Yu, Tong and Wang, Guoyin and Chen, Yiran},
  booktitle={IEEE ICASSP},
  pages={6915--6919},
  year={2024},
}

@inproceedings{yan2024frlora,
  author       = {Yunlu Yan and
                  Chun{-}Mei Feng and
                  Wangmeng Zuo and
                  Rick Siow Mong Goh and
                  Yong Liu and
                  Lei Zhu},
  title        = {Federated Residual Low-Rank Adaptation of Large Language Models},
  booktitle    = {International Conference on Learning Representations},
  year         = {2025},
}

@inproceedings{
bai2024federated,
title={Federated Fine-tuning of Large Language Models under Heterogeneous Tasks and Client Resources},
author={Jiamu Bai and Daoyuan Chen and Bingchen Qian and Liuyi Yao and Yaliang Li},
booktitle={38th Conference on Neural Information Processing Systems},
year={2024},
}

@inproceedings{
guo2025selective,
title={Selective Aggregation for Low-Rank Adaptation in Federated Learning},
author={Pengxin Guo and Shuang Zeng and Yanran Wang and Huijie Fan and Feifei Wang and Liangqiong Qu},
booktitle={International Conference on Learning Representations},
year={2025},
}

@inproceedings{xu2023fwdllm,
author = {Xu, Mengwei and Cai, Dongqi and Wu, Yaozong and Li, Xiang and Wang, Shangguang},
title = {Fwd{LLM}: {E}fficient federated finetuning of large language models with perturbed inferences},
year = {2024},
booktitle = {Proceedings of the 2024 USENIX Conference on Usenix Annual Technical Conference},
}

@inproceedings{qin2023federated,
  title={Federated Full-Parameter Tuning of Billion-Sized Language Models with Communication Cost under 18 Kilobytes},
  author={Qin, Zhen and Chen, Daoyuan and Qian, Bingchen and Ding, Bolin and Li, Yaliang and Deng, Shuiguang},
  booktitle={International Conference on Machine Learning},
  year={2024},
}

@inproceedings{zhaoseparate,
  author       = {Hanzhen Zhao and
                  Xingyu Xie and
                  Cong Fang and
                  Zhouchen Lin},
  title        = {{SEPARATE:} {A} Simple Low-rank Projection for Gradient Compression
                  in Modern Large-scale Model Training Process},
  booktitle    = {International Conference on Learning Representations},
  year         = {2025},
}

@inproceedings{shu2024ferret,
  author       = {Yao Shu and
                  Wenyang Hu and
                  See{-}Kiong Ng and
                  Bryan Kian Hsiang Low and
                  Fei Richard Yu},
  title        = {Ferret: Federated Full-Parameter Tuning at Scale for Large Language
                  Models},
  booktitle    = {International Conference on Machine Learning},
  year         = {2025},
}

@inproceedings{kingma2014adam,
  author       = {Diederik P. Kingma and
                  Jimmy Ba},
  title        = {Adam: {A} Method for Stochastic Optimization},
  booktitle    = {International Conference on Learning Representations},
  year         = {2015},
}

@inproceedings{chenenhancing,
  author       = {Yiming Chen and
                  Yuan Zhang and
                  Liyuan Cao and
                  Kun Yuan and
                  Zaiwen Wen},
  title        = {Enhancing Zeroth-order Fine-tuning for Language Models with Low-rank
                  Structures},
  booktitle    = {International Conference on Learning Representations},
  year         = {2025},
}

@inproceedings{koloskova2020unified,
  title={A unified theory of decentralized {SGD} with changing topology and local updates},
  author={Koloskova, Anastasia and Loizou, Nicolas and Boreiri, Sadra and Jaggi, Martin and Stich, Sebastian},
  booktitle={International Conference on Machine Learning},
  year={2020}
}

@inproceedings{wang2018glue,
title={{GLUE}: A Multi-Task Benchmark and Analysis Platform for Natural Language Understanding},
author={Alex Wang and Amanpreet Singh and Julian Michael and Felix Hill and Omer Levy and Samuel R. Bowman},
booktitle={Proceedings of the 2018 {EMNLP} Workshop {B}lackbox{NLP}: Analyzing and Interpreting Neural Networks for {NLP}},
year={2018},
}

@article{liu2019roberta,
  title={{R}o{BERT}a: A robustly optimized bert pretraining approach},
  author={Liu, Yinhan and Ott, Myle and Goyal, Naman and Du, Jingfei and Joshi, Mandar and Chen, Danqi and Levy, Omer and Lewis, Mike and Zettlemoyer, Luke and Stoyanov, Veselin},
  journal={arXiv preprint arXiv:1907.11692},
  year={2019}
}

@article{lialin2023scaling,
  title={Scaling down to scale up: A guide to parameter-efficient fine-tuning},
  author={Lialin, Vladislav and Deshpande, Vijeta and Rumshisky, Anna},
  journal={arXiv preprint arXiv:2303.15647},
  year={2023}
}

@inproceedings{
wang2024flora,
title={{FL}o{RA}: Federated Fine-Tuning Large Language Models with Heterogeneous Low-Rank Adaptations},
author={Ziyao Wang and Zheyu Shen and Yexiao He and Guoheng Sun and Hongyi Wang and Lingjuan Lyu and Ang Li},
booktitle={38th Conference on Neural Information Processing Systems},
year={2024},
}

@inproceedings{
malladi2023fine,
title={Fine-Tuning Language Models with Just Forward Passes},
author={Sadhika Malladi and Tianyu Gao and Eshaan Nichani and Alex Damian and Jason D. Lee and Danqi Chen and Sanjeev Arora},
booktitle={37th Conference on Neural Information Processing Systems},
year={2023},
}

@article{fang2022communication,
  title={Communication-efficient stochastic zeroth-order optimization for federated learning},
  author={Fang, Wenzhi and Yu, Ziyi and Jiang, Yuning and Shi, Yuanming and Jones, Colin N and Zhou, Yong},
  journal={IEEE Transactions on Signal Processing},
  volume={70},
  pages={5058--5073},
  year={2022},
  publisher={IEEE}
}

@inproceedings{hsu2019measuring,
title	= {Measuring the Effects of Non-Identical Data Distribution for Federated Visual Classification},
author	= {Harry Hsu and Hang Qi and Matthew Brown},
year	= {2019},
booktitle = {International Workshop on Federated Learning for User Privacy and Data Confidentiality in Conjunction with NeurIPS 2019}
}

@ARTICLE{gao2025heterogeneity,
  author={Gao, Zhidong and Zhang, Zhenxiao and Zhang, Yu and Gong, Yanmin and Guo, Yuanxiong},
  journal={IEEE Internet of Things Journal}, 
  title={Heterogeneity-Aware Resource Allocation and Topology Design for Hierarchical Federated Edge Learning}, 
  year={2025},
  volume={12},
  number={19},
  pages={39803-39816},
}

@article{zhang2024heterogeneity,
  title={Heterogeneity-aware cooperative federated edge learning with adaptive computation and communication compression},
  author={Zhang, Zhenxiao and Gao, Zhidong and Guo, Yuanxiong and Gong, Yanmin},
  journal={IEEE Transactions on Mobile Computing},
  volume={24},
  number={3},
  pages={2073--2084},
  year={2024},
  publisher={IEEE}
}

@inproceedings{guo2022hybrid,
  title={Hybrid local {SGD} for federated learning with heterogeneous communications},
  author={Guo, Yuanxiong and Sun, Ying and Hu, Rui and Gong, Yanmin},
  booktitle={International Conference on Learning Representations},
  year={2022}
}

@inproceedings{infocommzhenxiao,
  title={Federated Adaptive Fine-Tuning of Large Language Models with Heterogeneous Quantization and {L}o{RA}},
  author={Gao, Zhidong and Zhang, Zhenxiao and Guo, Yuanxiong and Gong, Yanmin},
  booktitle={IEEE International Conference on Computer Communications},
  year={2025},
}
\end{document}